\documentclass{article} % For LaTeX2e

\usepackage{iclr2026_conference}
\iclrfinalcopy  

\usepackage{times}

\usepackage{amsmath,amsfonts,bm}

\def\eqref#1{equation~\ref{#1}}
\def\1{\bm{1}}

\DeclareMathAlphabet{\mathsfit}{\encodingdefault}{\sfdefault}{m}{sl}
\SetMathAlphabet{\mathsfit}{bold}{\encodingdefault}{\sfdefault}{bx}{n}

\usepackage{hyperref}
\usepackage{url}

\usepackage[utf8]{inputenc} % allow utf-8 input
\usepackage[T1]{fontenc}    % use 8-bit T1 fonts
\usepackage{url}            % simple URL typesetting
\usepackage{booktabs}       % professional-quality tables
\usepackage{amsfonts}       % blackboard math symbols
\usepackage{nicefrac}       % compact symbols for 1/2, etc.
\usepackage{microtype}      % microtypography
\usepackage{xcolor}         % colors
\usepackage{amsmath}
\usepackage{algorithm}
\usepackage{algpseudocode}
\usepackage{tikz}
\usetikzlibrary{shapes, positioning,  arrows.meta,positioning,fit,calc,backgrounds,decorations.pathreplacing,patterns.meta}
\usepackage{subcaption}
\usepackage{listings}
\usepackage{hyperref}
\usepackage{dcolumn}
\usepackage{colortbl}
\usepackage{graphicx}
\usepackage{amssymb}
\usepackage{xcolor}
\usepackage{pifont}
\usepackage{wrapfig}
\usepackage{siunitx} % for number alignment (optional)
\usepackage{pgfplots}
\usepackage{pgfplotstable}
\pgfplotsset{compat=1.18}
\usepackage{tcolorbox}
\usepackage{amsthm}   
\usepackage{sansmath} % for uniform sans math in nodes

\newtheorem{theorem}{Theorem}[section]   % “Theorem” numbered within sections
\newtheorem{lemma}[theorem]{Lemma}       % Lemmas share the same counter
\newtheorem{corollary}[theorem]{Corollary}

\definecolor{keywordcolor}{rgb}{0.13, 0.29, 0.53}
\definecolor{stringcolor}{rgb}{0.63, 0.13, 0.13}
\definecolor{commentcolor}{rgb}{0.13, 0.54, 0.13}

\newcolumntype{d}[1]{D{.}{.}{#1}} % For decimal alignment on "-"

\definecolor{orangefield}{rgb}{1.0,0.5,0}
\definecolor{cyanfield}{rgb}{0,0.5,0.5}
\definecolor{yellowfield}{rgb}{1.0,1.0,0.0}
\definecolor{violetfield}{rgb}{1.0,0.0,1.0}

\definecolor{greenfield}{rgb}{0.0,0.39,0.15}

\definecolor{orangegoal}{rgb}{0.75,0.37,0}
\definecolor{cyangoal}{rgb}{0.0,0.65,0.65}
\definecolor{yellowgoal}{rgb}{0.71,0.71,0}
\definecolor{violetgoal}{rgb}{0.7,0.0,0.7}

\definecolor{darkgreen1}{rgb}{0.0, 0.7, 0.0} % Medium dark green
\definecolor{darkgreen2}{rgb}{0.0, 0.3, 0.0} % Darker green
\definecolor{darkgreen3}{rgb}{0.0, 0.2, 0.0} % Very dark green

\definecolor{graybg}{RGB}{230, 230, 230}

\definecolor{trajA}{RGB}{34,139,34}     % green
\definecolor{trajB}{RGB}{255,140,0}     % dark orange
\definecolor{trajC}{RGB}{200,30,45}     % red
\definecolor{softgray}{RGB}{120,120,120}
\definecolor{lightplus}{RGB}{210,240,210}
\definecolor{lightminus}{RGB}{255,220,220}

\newcommand{\PlotW}{6.6cm}
\newcommand{\PlotH}{3.2cm}
\newcommand{\ColSep}{1.2cm}
\newcommand{\RowGap}{0.9cm}

\newcommand{\offA}{0.15}
\newcommand{\offB}{0.0}
\newcommand{\offC}{0.50} % "worst" remains at 0

\definecolor{C1}{RGB}{0,114,178}    % blue
\definecolor{C2}{RGB}{213,94,0}     % vermillion
\definecolor{C3}{RGB}{0,158,115}    % bluish green
\definecolor{SoftGray}{RGB}{120,120,120}
\definecolor{PlusTint}{RGB}{225,245,255}
\definecolor{MinusTint}{RGB}{255,235,238}
\definecolor{MirrorGrey1}{RGB}{200,200,200}
\definecolor{MirrorGrey2}{RGB}{175,175,175}
\definecolor{MirrorGrey3}{RGB}{150,150,150}
\newcommand{\cell}{0.62cm}

\title{Relative Value Learning}

\author{Marc Höftmann, Jan Robine \& Stefan Harmeling \\
Department of Computer Science, Technical University of Dortmund, Germany \\
Lamarr Institute for Machine Learning and Artificial Intelligence\\
\texttt{\{marc.hoeftmann,jan.robine,stefan.harmeling\}@tu-dortmund.de} \\
}

\begin{document}

\maketitle

\begin{abstract}
	
	In reinforcement learning, critics typically estimate absolute state values $V(s)$, estimating how good a particular situation is in isolation. However, it turns out that only differences in value are relevant for control.
	Motivated by this, we propose \emph{Relative Value Learning} (RV), a framework that learns value differences directly via an antisymmetric function $\Delta(s_i, s_j) = V(s_i) - V(s_j)$. We introduce a pairwise Bellman operator and prove it is a $\gamma$-contraction with a unique fixed point equal to the true value differences, derive well-posed $1$-step, $n$-step and $\lambda$-return targets and reconstruct generalized advantage estimation from pairwise differences to obtain an unbiased policy-gradient estimator (R-GAE). Beyond theoretical results, we integrate RV with PPO and achieve competitive performance on the Atari benchmark (49 ALE games) compared to standard PPO, indicating that relative value estimation is an effective alternative to absolute critics. Our code is available at {\small \url{https://github.com/Hauf3n/relative-value-learning}}.

%	In reinforcement learning (RL), critics traditionally learn absolute state values, estimating how good a particular situation is in isolation. Adding any constant to 
%	$V(s)$ leaves action preferences unchanged. Thus only value differences are relevant for decision making.
%	Motivated by this fact, we ask the question whether these differences can be learned directly. For this, we propose \emph{Relative Value Learning} (RV), a framework that considers antisymmetric value differences $\Delta(s_i, s_j) = V(s_i) - V(s_j)$. We define a new pairwise Bellman operator and prove it is a $\gamma$-contraction with a unique fixed point equal to the true value differences, derive well-posed $1$-step/$n$-step/$\lambda$-return targets and reconstruct generalized advantage estimation from pairwise differences to obtain an unbiased policy-gradient estimator (R-GAE). Besides rigorous theoretical contributions, we integrate RV with PPO and achieve competitive performance on the Atari benchmark (49 games, ALE) compared to standard PPO, indicating that relative value estimation is an effective alternative to absolute critics. Source code will be made available.
	
\end{abstract}

%	\begin{center}
	%		\textbf{Keywords:} TD Learning, Relative Value Learning, Actor-Critic
	%	\end{center}

\vspace{-4mm}
\section{Motivation}

\begin{wrapfigure}{r}{0.49\textwidth}
	\centering
	\boxed{\includegraphics[width=0.48\textwidth]{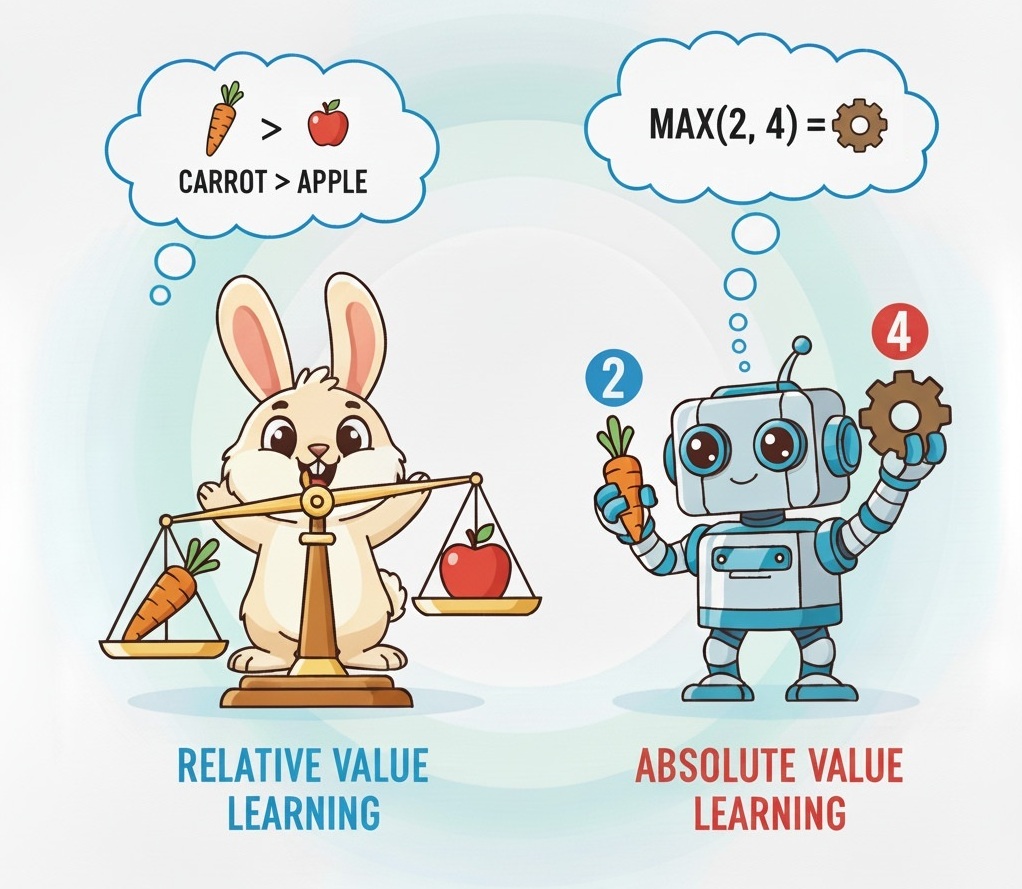}}
	\caption{\textbf{Relative Value vs. Absolute Value Learning.}. RV (left) learns value differences between states for decision making while AV (right) learns the value for each state in isolation and then decides for the best decision (e.g. by taking maximum in Q-learning). }
	\label{fig:sample}
\end{wrapfigure}

In control, actions are chosen by \emph{comparisons}, not by absolute magnitudes.
What matters is how good one state (or action) is \emph{relative} to another. Formally, for $\gamma<1$, $V^\pi$ is uniquely determined by the Bellman equation, but shifting it by any constant leaves advantages and greedy choices unchanged. This gauge freedom makes the absolute scale behaviorally meaningless: only differences matter. Advantages $A^\pi(s,a)$ or greedy action selection $\max_a Q^\pi(s,a)$ imply that absolute scales are not behaviorally meaningful. \emph{Optimal control depends not on absolute magnitudes but exclusively on relative differences.} Despite this, standard value-based RL trains a critic to approximate $V^\pi$ (or $Q^\pi$) and treats differences as a derived quantity. This introduces unnecessary degrees of freedom (the unpinned offset), invites drift under reward shaping or baseline changes, and can be ill-posed in settings where only comparisons or implicit feedback are given (e.g., preference-based or human-in-the-loop RL), precisely where the absolute scale is ambiguous while pairwise relations remain well-defined. A formulation that places \emph{relative} information at the center would match the invariances of the decision making problem itself.

%In value-based reinforcement learning (RL), the predominant paradigm is to either learn a value function 
%$V(s)$ or action-value function $Q(s,a)$ and then using it for control. Yet in decision making problems for biological and RL agents, the magnitude of a state’s value does not determine its behavior, but how that value compares to alternatives. Preferences, rankings, and choices are fundamentally relative. A decision is made because it is better than others, not because it crosses any particular absolute value threshold.
%This observation raises a conceptual question: \emph{if control depends only on differences of values, why learn absolute values at all?} Formally, under a fixed policy $\pi$ the value function is defined only up to an additive constant: if $V(s)$ satisfies the Bellman equation, then so does adding a constant $V(s) + c \:\: \forall c \in \mathbb{R}$. This \emph{gauge freedom} means that absolute values are not uniquely identifiable and that only \emph{value differences} carry invariant meaning. Consequently, policy gradient methods already exploit this property by subtracting baselines. \\

We therefore adopt the following viewpoint: make value differences the \emph{primary} learning objective.
Concretely, we learn an antisymmetric function $\Delta_\theta:S\times S\to\mathbb{R}$ with $\Delta_\theta(s_i,s_j)=-\Delta_\theta(s_j,s_i)$ that approximates $V^\pi(s_i)-V^\pi(s_j)$. Working directly with $\Delta_\theta$ eliminates the gauge degree of freedom by construction and aligns the critic with the invariants already exploited by policy-gradient methods through baselines. In addition, advantages can be reconstructed from pairwise differences without knowing the absolute value of any state, providing an unbiased policy-gradient estimator (R-GAE). This perspective is not just aesthetic. It enables a clean analytic foundation.

\textbf{Our Contributions:}
\begin{enumerate}
	\item \textbf{Pairwise Value Operator.} We formalize a Bellman operator on antisymmetric functions and prove $\gamma$-contraction with a fixed point equal to true value differences.
	\item \textbf{Value Targets.} We derive 1-step / $n$-step / $\lambda$-return targets using only observable rewards and non terminal pairwise terms, ensuring well-posed bootstrapping targets.
	\item \textbf{Relative GAE (R-GAE).} We show that GAE can be reconstructed from pairwise differences that also results in an \emph{unbiased} policy gradient estimator. In addition we derive the relationship between GAE and R-GAE and show that RV achieves competitive performance on Atari.
\end{enumerate}

%\paragraph{Paper structure.} Section 2 develops the RV framework (including operator, contraction, relative GAE, value targets). Section 3 covers relative value initialization, the training objective, and architecture. Section 4 presents experiments on the Atari benchmarks. Appendices provide the bootstrapping target derivations and empirical validation of the trajectory-constant baseline.

\section{Related Work}

Classical value-based RL trains absolute state or action values using Bellman operators, e.g. TD($\lambda$), DQN, Double DQN, Rainbow, and other actor critic variants \citep{sutton2018reinforcement,mnih2013playing,van2016deep, wang2016sample, hessel2018rainbow}
Distributional critics \citep{bellemare2017distributional, dabney2018implicit} restructure the target but continue to operate in an absolute space. Although adding a constant to $V^\pi$ or $Q^\pi$ leaves action preferences unchanged \citep{sutton2018reinforcement}, absolute critics still predict a scalar on an arbitrary scale. By contrast, RV removes the offset degree of freedom at the model level by learning antisymmetric value differences over state pairs, aligning the function class with the invariances of decision making. This invariance is a special case of policy-invariant reward transformations (potential-based shaping) \citep{ng1999policy}. Several methods estimate or emphasize advantages rather than values. Direct Advantage Estimation (DAE) \cite{pan2022direct} directly learns advantages $A^\pi(s,a)$ to bypass value learning. Dueling networks decompose learning $Q^\pi(s,a)=V^\pi(s)+A^\pi(s,a)$ to improve sample efficiency and robustness \cite{wang2016dueling}. Actor-critic variants such as A2C/A3C \cite{mnih2016asynchronous}, TRPO \cite{schulman2015trust} and clipped PPO \cite{schulman2017proximal} then use baselines and advantage estimates within trust-region-style updates. In contrast, RV directly learns value differences $\Delta(s_i,s_j)$ with an explicit pairwise Bellman operator to provide the actor \emph{relative advantages}. RV’s critic uses a siamese difference head that enforces antisymmetry and zero self-difference by design. Earlier work in stochastic control has also advocated learning value differences rather than absolute values. \citet{bertsekas1997differential} introduces differential training of rollout policies, approximating cost-to-go differences with TD-style methods.
In RL, pairwise objectives have appeared mainly in preference-based and human-in-the-loop RL \citep{christiano2017deep, leike1811scalable} or in inverse RL, but not as a Bellman-consistent value critic. Our work fills this gap.

\section{Relative Value Learning}
\label{sec:method}
We now present \emph{Relative Value Learning} (RV), which learns an \emph{antisymmetric} function over all state pairs with a neural network
\begin{equation}
	\Delta_\theta:\mathcal S\times\mathcal S\to\mathbb R,\qquad 
	\Delta_\theta(s_i,s_j)=-\Delta_\theta(s_j,s_i),
\end{equation}
that tries to approximate the \emph{value difference} $\Delta^\pi(s_i,s_j) \;:=\; V^\pi(s_i) - V^\pi(s_j)$, under a fixed policy $\pi$. As a result, RV avoids exact value estimates during training and integrates naturally with on-policy actor-critic methods (e.g., PPO) by supplying \emph{relative advantages} (R-GAE).

\subsection{Preliminaries}
\label{sec:prelims_objective}

We consider a discounted Markov decision process (MDP) $(S, A, P, r, \gamma)$ with state space $S$, action space $A$, transition function $P(s'\mid s,a)$, bounded reward function $r\!: S\times A \to [r_{\min}, r_{\max}]$, and discount factor $\gamma \in [0,1)$. Considering a stochastic policy $\pi(a\mid s)$, the value function under $\pi$ is
\begin{equation}
	V^{\pi}(s) \;:=\; \mathbb{E}_{\pi, P}\Bigl[\sum_{t=0}^{\infty} \gamma^t\, r(s_t,a_t)\,\Big|\, s_0=s\Bigr],
\end{equation}
which satisfies the Bellman equation
\begin{equation}
	V^{\pi}(s) \;=\; r_{\pi}(s) + \gamma\, \mathbb{E}_{s'\sim P^{\pi}(\cdot\mid s)}\bigl[V^{\pi}(s')\bigr], \quad r_{\pi}(s) := \mathbb{E}_{a\sim\pi(\cdot\mid s)}[r(s,a)].
	\label{eq:bellman_abs}
\end{equation}

\paragraph{Gauge Freedom and Relative Values.}  Shifted value functions do also satisfy a Bellman equation like Equation~\ref{eq:bellman_abs} (Lemma \ref{lem:constant-gauge} explains how to shape the reward). These shifts are irrelevant for control, since the advantages or greedy action selection are invariant.  Hence the absolute scale of $V^{\pi}$ is not meaningful, and only differences of values are invariant. We therefore define the \emph{pairwise value difference}
\begin{equation}
	\Delta^{\pi}(s_i,s_j) \;=\; V^{\pi}(s_i) - V^{\pi}(s_j),\quad (s_i,s_j)\in S\times S.
	\label{eq:def_rel_value}
\end{equation}
Note that it is antisymmetric $\Delta^{\pi}(s_i,s_j)=-\Delta^{\pi}(s_j,s_i)$, and satisfies $\Delta^{\pi}(s,s)=0$ for all $s$.

\paragraph{Pairwise Bellman Identity.} Fix $(s_i,s_j)\in S\times S$. Draw $s'_i\sim P^{\pi}(\cdot\mid s_i)$ and $s'_j\sim P^{\pi}(\cdot\mid s_j)$ \emph{independently} (only conditional on $s_i$ or $s_j$ respectively). Subtracting the Bellman equations of (\ref{eq:bellman_abs}) for $s_i$ and $s_j$ gives the recursive identity
\begin{equation}
	\Delta^{\pi}(s_i,s_j) \;=\; r_{\pi}(s_i) - r_{\pi}(s_j) 
	\; +\; \gamma\, \mathbb{E}_{\substack{s'_i\sim P^{\pi}(\cdot\mid s_i)\\ s'_j\sim P^{\pi}(\cdot\mid s_j)}}\bigl[\Delta^{\pi}(s'_i,s'_j)\bigr].
	\label{eq:pairwise_identity}
\end{equation}
Equation~\ref{eq:pairwise_identity} depends only on observable one-step rewards via $r_{\pi}$ and on pairwise differences at successors. It is invariant to any additive shift of $V^{\pi}$ as we have shown in Lemma \ref{lem:constant-gauge}.

\subsection{Pairwise Bellman Operator}
\label{subsec:pairwise-bellman-operator}

Fix a policy $\pi$. We work on the Banach space of bounded antisymmetric pairwise functions
\begin{align}
	\mathcal{F}
	\;:=\;
	\Bigl\{
	&\Delta : \mathcal{S}\times\mathcal{S}\to\mathbb{R}
	\;\big|\;
	\Delta(s_i,s_j)=-\Delta(s_j,s_i),\;
	\|\Delta\|_\infty<\infty
	\Bigr\},\\
	\:
	&\|\Delta\|_\infty \;=\; \sup_{(s_i,s_j)} |\Delta(s_i,s_j)|. \nonumber
\end{align}
In addition, let $\Delta r^\pi(s_i,s_j) := r^\pi(s_i)-r^\pi(s_j)$ be the immediate reward difference.

\paragraph{Operator form.}
Let $\widehat{\mathcal{P}}_\pi$ act on $\mathcal{F}$ by
\begin{equation}
	(\widehat{\mathcal{P}}_\pi \Delta)(s_i,s_j)
	\;=\;
	\mathbb{E}_{s'_i\sim P^\pi(\cdot|s_i),\, s'_j\sim P^\pi(\cdot|s_j)}
	\bigl[\Delta(s'_i,s'_j)\bigr].
	\label{eq:pairwise-bellman-identity}
\end{equation}

Define the \emph{pairwise Bellman operator} $T_\pi:\mathcal{F}\to\mathcal{F}$ by
\begin{equation}
	(T_\pi \Delta)(s_i,s_j)
	\;:=\;
	\Delta r^\pi(s_i,s_j)\;+\;\gamma\,(\widehat{\mathcal{P}}_\pi \Delta)(s_i,s_j).
	\label{eq:pairwise-bellman-operator}
\end{equation}
This coincides with the definition from Equation~(\ref{eq:pairwise_identity}) used in our method when $s'_i$ and $s'_j$ are drawn independently from $P^\pi(\cdot|s_i)$ and $P^\pi(\cdot|s_j)$, respectively.

\begin{theorem}[Contraction and uniqueness]
	\label{thm:contraction}
	For any $\Delta_1,\Delta_2\in\mathcal{F}$,
	\[
	\|T_\pi \Delta_1 - T_\pi \Delta_2\|_\infty
	\;\le\;
	\gamma\,\|\Delta_1-\Delta_2\|_\infty.
	\]
	Consequently, by the Banach fixed-point theorem, $T_\pi$ has a unique fixed point $\Delta^\pi\in\mathcal{F}$. Moreover, this fixed point equals the true value differences, i.e., $\Delta^\pi(s_i,s_j)=V^\pi(s_i)-V^\pi(s_j)$ for all $(s_i,s_j)\in\mathcal{S}\times\mathcal{S}$.
\end{theorem}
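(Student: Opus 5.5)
The proof has four short steps: check that $T_\pi$ maps $\mathcal{F}$ into itself and that $\mathcal{F}$ is complete; prove the contraction estimate; invoke Banach; and identify the fixed point by direct substitution.

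\textbf{Well-posedness.} For $\Delta\in\mathcal{F}$, the reward term satisfies $\Delta r^\pi(s_j,s_i)=-\Delta r^\pi(s_i,s_j)$ and $|\Delta r^\pi|\le r_{\max}-r_{\min}$. For the transition term, swap the roles of the two independent successor draws and use antisymmetry of $\Delta$:
\[
(\widehat{\mathcal{P}}_\pi\Delta)(s_j,s_i)=\mathbb{E}\bigl[\Delta(s'_j,s'_i)\bigr]=-\mathbb{E}\bigl[\Delta(s'_i,s'_j)\bigr]=-(\widehat{\mathcal{P}}_\pi\Delta)(s_i,s_j).
\]
Its magnitude is at most $\|\Delta\|_\infty$, so $T_\pi\Delta\in\mathcal{F}$. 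Completeness holds because $\mathcal{F}$ is a closed subspace of the bounded functions on $\mathcal{S}\times\mathcal{S}$ under the sup norm: a uniform limit of antisymmetric functions is again antisymmetric.

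\textbf{Contraction.} The reward terms cancel, leaving
\[
(T_\pi\Delta_1-T_\pi\Delta_2)(s_i,s_j)=\gamma\,\mathbb{E}_{s'_i,s'_j}\bigl[(\Delta_1-\Delta_2)(s'_i,s'_j)\bigr].
\]
The expectation is taken under the product probability measure $P^\pi(\cdot\mid s_i)\otimes P^\pi(\cdot\mid s_j)$, so it is bounded in absolute value by $\|\Delta_1-\Delta_2\|_\infty$. Taking the supremum over $(s_i,s_j)$ gives the stated $\gamma$-contraction. Since $\gamma<1$, Banach's fixed-point theorem yields a unique fixed point in $\mathcal{F}$.

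\textbf{Identification of the fixed point.} I set $D(s_i,s_j):=V^\pi(s_i)-V^\pi(s_j)$ and show that $D$ is this fixed point.
\begin{itemize}
\item $D\in\mathcal{F}$: it is antisymmetric, and since bounded rewards give $V^\pi\in[r_{\min}/(1-\gamma),\,r_{\max}/(1-\gamma)]$, we have $\|D\|_\infty\le (r_{\max}-r_{\min})/(1-\gamma)$.
\item $T_\pi D=D$: this is the pairwise Bellman identity (Equation~\ref{eq:pairwise_identity}). Explicitly, under independence the expectation of $V^\pi(s'_i)-V^\pi(s'_j)$ factorizes into $\mathbb{E}[V^\pi(s'_i)]-\mathbb{E}[V^\pi(s'_j)]$. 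Subtracting the two instances of Equation~\ref{eq:bellman_abs} then gives the claim.
\end{itemize}
Uniqueness of the fixed point then forces the fixed point to equal $D$.

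\textbf{Main obstacle.} None of these steps is deep. The one point that needs care is measure-theoretic: the expectation under the product kernel must be well-defined, which requires measurability of $\Delta$ (trivial for countable $\mathcal{S}$), and the factorization used in the identification step must be justified, which holds because $V^\pi$ is bounded. I would state the measurability assumption explicitly and otherwise proceed as above.
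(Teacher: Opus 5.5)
Your proposal is correct and follows the same route as the paper: the contraction step is exactly the paper's argument (the reward differences cancel, the expectation under the two successor kernels is bounded by $\|\Delta_1-\Delta_2\|_\infty$, then take the supremum). You also make explicit three things the paper's proof leaves implicit: that $T_\pi$ maps $\mathcal{F}$ into itself, that $\mathcal{F}$ is complete, and that $V^\pi(s_i)-V^\pi(s_j)$ is the fixed point via the pairwise Bellman identity (Equation~\ref{eq:pairwise_identity}); for that last step only linearity of expectation and the correct marginals are needed, not independence.
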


\begin{proof}
	The immediate difference $\Delta r^\pi$ cancels in $T_\pi \Delta_1 - T_\pi \Delta_2$, hence for each $(s_i,s_j)$,
	\begin{align}
		\bigl|(T_\pi \Delta_1 - T_\pi \Delta_2)(s_i,s_j)\bigr|
		\;=\;
		\gamma\,\Bigl|\mathbb{E}\bigl[(\Delta_1-\Delta_2)(s'_i,s'_j)\bigr]\Bigr|
		\;&\le\;
		\gamma\,\mathbb{E}\bigl[|(\Delta_1-\Delta_2)(s'_i,s'_j)|\bigr] \nonumber\\
		\;&\le\;
		\gamma\,\|\Delta_1-\Delta_2\|_\infty. \nonumber
	\end{align}
	Since this bound holds for all pairs $(s_i,s_j)$ thus taking the supremum over $(s_i,s_j)$ concludes with  $\|\mathcal{T}\Delta_1-\mathcal{T}\Delta_2\|_\infty\le\gamma\|\Delta_1-\Delta_2\|_\infty.$
\end{proof}
Related ‘relative’ value functions arise in average-reward MDPs, where values are defined only up to an additive constant and algorithms fix the gauge via relative value iteration \citep{abounadi2001learning,puterman2014markov,bertsekas2025neuro}.

\subsection{Relative Generalized Advantage Estimation (R-GAE)}
\label{sec:R-GAE}
In this section, we define the R-GAE estimator which is in essence analogous to GAE \citep{schulman2015high}. Furthermore, we derive the relationship between GAE and R-GAE and show that both estimators learn the same (optimal) policy. Then, R-GAE is combined with PPO \citep{schulman2017proximal} to provide \emph{relative advantages}.
Hereby, one of our key theoretical contributions is the fact that GAE can be reconstructed without knowing the exact value of a state.
\vspace{-3mm}
\paragraph{Relative GAE.} First, we construct \emph{relative values} $\tilde V_\theta$ for each state of an arbitrary environment rollout $(s_0,s_1,\dots,s_T)$, where $T$ denotes the rollout length. Note in our notation that $s_0$ is \emph{not} the environment's start state, but it can be any state. This notation simplifies the following descriptions. We define the relative value sequence by telescoping the differences:
\vspace{-1mm}
\begin{equation}
	\label{eq:val_init}
	\tilde V_\theta(s_0) := 0, \quad \tilde V_\theta(s_t) := \sum_{k=0}^{t-1} \Delta_\theta(s_{k+1},s_k) \quad (t\ge1).
\end{equation}
Note, that it is also possible to use a larger step size for calculating relative values, e.g. one can directly compute $ \tilde V_\theta(s_t) = \Delta_\theta(s_{t},s_0)$. If $\Delta_\theta=\Delta^\pi$, then the relationship becomes $\tilde V_\theta(s_t) = V^\pi(s_t)-V^\pi(s_0)$. Analogous to GAE, define relative TD residuals
\begin{equation}
	\label{eq:relative_td_residuals}
	\tilde \delta_t \;:=\; r_t + \gamma \tilde V_\theta(s_{t+1}) - \tilde V_\theta(s_t),
\end{equation}
\vspace{-1mm}
and finally construct the relative GAE similarly
\begin{equation}
	\tilde A_t \;:=\; \sum_{l=0}^{T-t}(\gamma\lambda)^l \tilde \delta_{t+l}.
\end{equation}
Essentially, RV acts as the critic and replaces GAE with R-GAE in the PPO clipping objective. Furthermore, we give additional theoretical insight about R-GAE in the following. In short, these insights conclude that:
\begin{enumerate}
	\item There exists a relationship between GAE and R-GAE that is given as $\tilde A_t \;=\; A_t + B_t$. For more details, see Lemma \ref{lem:gae-relation}. \\
	\item  The optimal policy for both estimators is identical. See Corollary \ref{cor:unbiased-relative}.
\end{enumerate} 
\vspace{2mm}
\begin{lemma}[Relationship between GAE and R-GAE]\label{lem:gae-relation}
	We call $C:= V^\pi(s_0)$ the trajectory constant. If $\Delta_\theta=\Delta^\pi$, then the following equality holds
	\begin{equation}
		\tilde A_t \;=\; A_t + B_t,
	\end{equation}
	where $A_t$ is the standard GAE computed from $V^\pi$, and
	$
	B_t \;=\; (1-\gamma) C \sum_{l=0}^{T-t} (\gamma\lambda)^l.
	$
\end{lemma}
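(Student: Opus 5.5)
The plan is to reduce everything to a per-step identity between the relative and absolute TD residuals, and then sum that identity with the GAE weights.

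First, under the hypothesis $\Delta_\theta=\Delta^\pi$, we identify the relative values explicitly. Telescoping the definition in Equation~(\ref{eq:val_init}) gives
\[
\tilde V_\theta(s_t)=\sum_{k=0}^{t-1}\bigl(V^\pi(s_{k+1})-V^\pi(s_k)\bigr)=V^\pi(s_t)-V^\pi(s_0)=V^\pi(s_t)-C
\]
for all $t\ge 1$. The same formula holds trivially at $t=0$, since $\tilde V_\theta(s_0)=0$. Antisymmetry of $\Delta^\pi$ is not even needed here; only the definition (\ref{eq:def_rel_value}) is used.

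Next, we substitute this into the relative TD residual (\ref{eq:relative_td_residuals}):
\[
\tilde\delta_t=r_t+\gamma\bigl(V^\pi(s_{t+1})-C\bigr)-\bigl(V^\pi(s_t)-C\bigr)=\delta_t+(1-\gamma)C,
\]
where $\delta_t=r_t+\gamma V^\pi(s_{t+1})-V^\pi(s_t)$ is the standard TD residual used in GAE. The constant shift $(1-\gamma)C$ is the same for every $t$, because $C$ depends only on the fixed anchor state $s_0$ of the rollout.

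Finally, we plug this into $\tilde A_t=\sum_{l=0}^{T-t}(\gamma\lambda)^l\tilde\delta_{t+l}$ and split the sum by linearity. The $\delta$ part gives $A_t$, which is GAE over the same truncation horizon. The constant part gives $(1-\gamma)C\sum_{l=0}^{T-t}(\gamma\lambda)^l=B_t$.

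The main obstacle is not the algebra but the bookkeeping at the ends of the rollout, for two reasons. The residual at index $t+l=T$ references $\tilde V_\theta(s_{T+1})$, so we need the telescoped definition to extend to that index, or equivalently take the rollout to contain the bootstrap state. Terminal states also need care: if a terminal mask zeroes the bootstrap term, the residual becomes $r_t-V^\pi(s_t)+C$ rather than $\delta_t+(1-\gamma)C$. We will therefore state the lemma for the non-terminating rollout, matching the summation limits in the definition of $\tilde A_t$, and define $A_t$ with exactly those same limits so the identity holds term by term.
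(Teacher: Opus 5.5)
Your proof is correct and follows the paper's argument step for step. You telescope to get $\tilde V_\theta(s_t)=V^\pi(s_t)-C$, substitute to obtain $\tilde\delta_t=\delta_t+(1-\gamma)C$, and sum with the GAE weights; your caveats are valid boundary conditions that the paper's proof leaves implicit, namely that the index $t+l=T$ needs the bootstrap state $s_{T+1}$, and that a terminal mask would instead give $\tilde\delta_t=\delta_t+C$.
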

\begin{proof}
	By construction of the relative values, we have that
	$\tilde V(s_t)=V^\pi(s_t)-C$ for all $t$. Hence inserting this form in Equation \ref{eq:relative_td_residuals} gives
	\begin{equation}
		\tilde\delta_t
		= r_t + \gamma\bigl(V^\pi(s_{t+1})-C\bigr) - \bigl(V^\pi(s_t)-C\bigr)
		= \underbrace{r_t + \gamma V^\pi(s_{t+1}) - V^\pi(s_t)}_{\delta_t}
		\;+\; (1-\gamma)\,C,
	\end{equation}
	where $\delta_t$ is the TD residual. Therefore, for each timestep the relative advantage is
	\[
	\tilde A_t
	= \sum_{l=0}^{T-t} (\gamma\lambda)^l \,\tilde\delta_{t+l}
	= \sum_{l=0}^{T-t} (\gamma\lambda)^l \,\delta_{t+l} 
	\;+\; (1-\gamma)\,C \sum_{l=0}^{T-t} (\gamma\lambda)^l
	= A_t + B_t,
	\]
\end{proof}
\paragraph{Gauge Invariance vs. Trajectory-constant Baseline.}  To get a better understanding of the trajectory constant $B_t$, we discuss its role and properties in Appendix \ref{sec:emperical_GAE_baseline}. Lemma \ref{lem:constant-gauge} states that adding any scalar $c \in R $ to $V^\pi$ (together with the corresponding reward shaping) leaves action preferences, advantages and pairwise value differences invariant; hence the absolute gauge of $V^\pi$ is behaviorally irrelevant.
Lemma \ref{lem:gae-relation}, however, shows when we reconstruct advantages from pairwise differences by anchoring to zero on each rollout, there exists indeed a difference quantified by $B_t$.
These statements are compatible and are not contradicting each other. Anchoring $\tilde{V}(s_0) = 0$ is a gauge fixing analogous to the differential (relative) value normalization used in average-reward MDPs \citep{abounadi2001learning,puterman2014markov,bertsekas2025neuro}. The only difference is scope: Ours is per-trajectory and theirs is global, so unbiasedness (see Corollary \ref{cor:unbiased-relative}) is preserved while a trajectory-constant $B_t$ may appear. 

As a remark, we can achieve pointwise equality with GAE, if we learned the \emph{non-antisymmetric} two-argument function
\begin{equation}
	\Delta_\gamma(s',s) := \gamma V(s') - V(s),
\end{equation}
so that the temporal-difference residual $\delta_t = r_t + \Delta_\gamma(s_{t+1},s_t)$ is modeled precisely. The corresponding Bellman operator is indeed a contractive self-map on the pairwise space, which can be derived analogous to Theorem \ref{thm:contraction}. However, we tried to apply $\Delta_\gamma(s',s)$ in practice, but the results are worse compared to R-GAE. Therefore, we continue to work with R-GAE and show in Section \ref{sec:mean-min-baseline} how to reduce the variance of $B_t$.

\begin{corollary}[Unbiasedness for policy gradient]\label{cor:unbiased-relative}
	Let $\nabla_\phi J(\phi):=\mathbb E_t\!\left[ \nabla_\phi\log \pi_\phi(a_t\mid s_t)\,A_t\right]$ denote the standard policy gradient (PG) under $\pi_\phi$. If the advantages in the score-function estimator are replaced by the relative advantages from Lemma~\ref{lem:gae-relation}, i.e.,
	\[
	\nabla_\phi \tilde J_{}(\phi)\;\ := \; \mathbb E_t\!\left[ \nabla_\phi\log \pi_\phi(a_t\mid s_t)\,\tilde A_t\right],
	\]
	then
	\begin{equation}\label{eq:unbiased}
		\nabla_\phi \tilde J_{}(\phi)
		\;=\;\nabla_\phi J(\phi).
	\end{equation}
\end{corollary}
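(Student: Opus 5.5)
By Lemma~\ref{lem:gae-relation}, when $\Delta_\theta=\Delta^\pi$ we have $\tilde A_t = A_t + B_t$ with $B_t = (1-\gamma)\,C\sum_{l=0}^{T-t}(\gamma\lambda)^l$ and $C=V^\pi(s_0)$. By linearity of expectation,
\[
\nabla_\phi \tilde J(\phi)=\nabla_\phi J(\phi)+\mathbb E_t\!\left[\nabla_\phi\log\pi_\phi(a_t\mid s_t)\,B_t\right].
\]
So the whole task is to show that the correction term vanishes. The plan is to recognize $B_t$ as a baseline: it depends only on the anchor state $s_0$, the time index $t$, the horizon $T$, and the constants $\gamma,\lambda$. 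It does not depend on the action $a_t$ or on anything sampled after $a_t$.

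The key step is the standard baseline argument, carried out conditionally. Fix $t$ and condition on the history $\mathcal H_t=(s_0,a_0,\dots,s_t)$. Then $B_t$ is $\mathcal H_t$-measurable; for $t=0$ it is a function of $s_0$, and for $t>0$ it is a function of an earlier state. By the tower property,
\[
\mathbb E\!\left[\nabla_\phi\log\pi_\phi(a_t\mid s_t)\,B_t\right]
=\mathbb E\!\left[B_t\,\mathbb E_{a_t\sim\pi_\phi(\cdot\mid s_t)}\!\left[\nabla_\phi\log\pi_\phi(a_t\mid s_t)\right]\right].
\]
The inner expectation equals
\[
\sum_a \nabla_\phi\pi_\phi(a\mid s_t)=\nabla_\phi\sum_a\pi_\phi(a\mid s_t)=\nabla_\phi 1=0,
\]
with an integral in place of the sum for continuous actions, under the usual regularity allowing differentiation under the integral. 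Summing or averaging over $t$ then gives equation~\eqref{eq:unbiased}.

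One point needs care: the conditioning must use the full history $\mathcal H_t$, not just $s_t$, because $C=V^\pi(s_0)$ is a function of $s_0$ rather than of $s_t$. This is legitimate because the policy is Markov: given $\mathcal H_t$, $a_t$ is distributed as $\pi_\phi(\cdot\mid s_t)$. The Markov property is what makes the score-function identity apply. The other ingredients needed are boundedness of $V^\pi$, which holds since rewards are bounded and $\gamma<1$, and a finite geometric sum in $B_t$; together these ensure $B_t$ is integrable so that the expectations split.

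I expect the main obstacle to be interpretive rather than technical. The identity holds when $\Delta_\theta=\Delta^\pi$, as in the lemma's hypothesis. More generally it holds whenever the discrepancy $\tilde A_t - A_t$ depends only on quantities fixed before $a_t$ is drawn, since the argument above never uses the value of $B_t$, only its measurability. I would also note that $\mathbb E_t$ is read as the expectation over trajectories generated by $\pi_\phi$, with the anchor $s_0$ drawn from the rollout itself, so that $s_0$ precedes $a_t$ for every $t\ge 0$.
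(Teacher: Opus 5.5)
Your proposal is correct and follows the paper's own argument. You split $\tilde A_t = A_t + B_t$ via Lemma~\ref{lem:gae-relation}, note that $B_t$ depends only on $V^\pi(s_0)$, $t$ and the fixed rollout length $T$, and eliminate the correction term by conditioning on the trajectory prefix up to $s_t$ and applying $\mathbb{E}_{a\sim\pi_\phi(\cdot\mid s_t)}[\nabla_\phi\log\pi_\phi(a\mid s_t)]=0$. Your insistence on conditioning on the full history $\mathcal H_t$, not on $s_t$ alone (because $C$ is a function of $s_0$), is what the paper means by ``condition on $s_t$ by the trajectory prefix up to time $t$,'' and you state it more cleanly.
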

\begin{proof}
	By Lemma~\ref{lem:gae-relation}, $\tilde A_t = A_t + B_t$ with $B_t = \frac{C \: \!\left(1-(\gamma\lambda)^{T-t+1}\right)}{1-\gamma\lambda}$ as trajectory constant, hence
	\begin{equation}
		\mathbb{E}_t\!\left[\nabla_\phi \log \pi_\phi(a_t\mid s_t)\,\tilde A_t\right]
		=\mathbb{E}_t\!\left[\nabla_\phi \log \pi_\phi(a_t\mid s_t)\,A_t\right]
		+\mathbb{E}_t\!\left[\nabla_\phi \log \pi_\phi(a_t\mid s_t)\,B_t\right].
	\end{equation}
	Condition on $s_t$ by the trajectory prefix up to time $t$ (so that $B_t$ does not depend on $a_t$) and then using the score-function identity gives
	\begin{equation}
		\mathbb{E}\!\left[\nabla_\phi \log \pi_\phi(a_t\mid s_t)\,B_t \;\middle|\; s_t\right]
		= B_t\,\mathbb{E}_{a_t\sim\pi_\phi(\cdot\mid s_t)}\!\left[\nabla_\phi \log \pi_\phi(a_t\mid s_t)\right]
		= 0.
	\end{equation}
	Taking expectations over $t$ yields
	\begin{equation}
		\mathbb{E}_t\!\left[\nabla_\phi \log \pi_\phi(a_t\mid s_t)\,\tilde A_t\right]
		= \mathbb{E}_t\!\left[\nabla_\phi \log \pi_\phi(a_t\mid s_t)\,A_t\right],
	\end{equation}
	which is precisely Equation~\ref{eq:unbiased}. \qedhere
\end{proof}

\subsection{Relative Value Targets}
\label{subsec:relative-value-targets}
When approximating the pairwise Bellman operator (Eq. \ref{eq:pairwise-bellman-operator}) from two sampled 1-step transitions
$
\tau_i = (s_i,a_i, r_i, d_i, s_{i+1}), \: \tau_j = (s_j, a_j, r_j, d_j, s_{j+1}),
$
terminal successor states make the naive bootstrap $\Delta(s_{i+1}, s_{j+1})$ ill-posed because absolute values are not available in our formulation. For example, if $s_{i+1}$ is a terminal state ($d_i=1$), then we need to calculate
\[
\Delta(s_{i+1}, s_{j+1}) = 0 - V(s_{j+1}) = -V(s_{j+1}),
\]

which is simply not accessible in that way. Therefore, we need to rearrange all bootstrapping targets in terms of observable rewards and non terminal pairwise differences $\Delta_\theta(\cdot,\cdot)$. In this way the targets get well-posed and are compatible with the operator $T_\pi$ on antisymmetric functions. Due to limited space, the complete formal derivation is given in Appendix~\ref{sec:rv_target_derivation}. 
\paragraph{1-step Target.}
Given the prediction $\Delta_\theta(s_i,s_j)$, the corresponding 1-step target takes the form
\begin{equation}
	\label{eq:rv-1step}
	y^{(1)}_{ij} \; :=\; (r_i - r_j) \;+\; \gamma \,\delta_{ij},
\end{equation}
where the bootstrap term $\delta_{ij}$ depends on the successor terminal flags $d_i,d_j \in \{0,1\}$:
\begin{equation}
	\label{eq:rv-1step-cases}
	\delta_{ij} \;=\;
	\begin{cases}
		\Delta_\theta(s_{i+1}, s_{j+1}), & \text{if } d_i = 0,\; d_j = 0,\\[2pt]
		\Delta_\theta(s_{i+1}, s_{j}) \;+\; r_j, & \text{if } d_i = 0,\; d_j = 1,\\[2pt]
		\Delta_\theta(s_{i}, s_{j+1}) \;-\; r_i, & \text{if } d_i = 1,\; d_j = 0,\\[2pt]
		\Delta_\theta(s_{i}, s_{j}) \;+\; r_j \;-\; r_i, & \text{if } d_i = 1,\; d_j = 1.
	\end{cases}
\end{equation}
When $d_i{=}d_j{=}1$, one may replace the last line of Equation~\ref{eq:rv-1step-cases} by $\delta_{ij}{=}0$ (both successors are absorbing with zero value) to reduce variance at ends of the two episodes. We use   $\delta_{ij}{=}0$ by default, so the derived fourth case in Appendix \ref{sec:rv_target_derivation} is optional.

\paragraph{N-step Target.}
To extend beyond 1-step temporal difference targets, we define the pairwise $n$-step case 
by considering two trajectories 
\[
\tau_i=\{(s_{i+k},r_{i+k},d_{i+k}, s_{i+k+1})\}_{k\ge 0},
\qquad
\tau_j=\{(s_{j+k},r_{j+k},d_{j+k}, s_{j+k+1})\}_{k\ge 0}.
\]
Then, the target becomes
\begin{equation}
	\label{eq:n-step-target}
	y^{(n)}_{ij} := \sum_{k=0}^{n-1} \gamma^k (r_{i+k} - r_{j+k}) + \gamma^n \Delta_\theta(s_{i+n}, s_{j+n}),
\end{equation}
with the assumption that neither trajectory terminates within the $n$-step window (i.e., $d_{i+k} = d_{j+k} = 0$ for $k < n$). If the trajectories have different length, then we take the minimum length. Note, that the final estimated difference $\Delta_\theta(s_{i+n}, s_{j+n})$ needs to consider the case distinction from Eq. \ref{eq:rv-1step-cases}.

\paragraph{$\lambda$-Return.}
Interpolating between high-bias/low-variance ($n{=}1$) and Monte-Carlo limits gives the pairwise $\lambda$-return
\begin{equation}
	\label{eq:rv-lambda}
	y^{(\lambda)}_{ij} \;:=\; (1-\lambda)\sum_{n=1}^{\infty} \lambda^{\,n-1}\, y^{(n)}_{ij}, \qquad \lambda \in [0,1],
\end{equation}
with the convention that $y^{(n)}_{ij}$ is truncated at the first encountered terminal using the case distinction from Equation \ref{eq:rv-1step-cases}.

\section{Relative Value Initialization}\label{sec:mean-min-baseline}

As demonstrated in Lemma \ref{lem:gae-relation}, the zero-anchor $\tilde V(s_0)=0$ (see Equation \ref{eq:val_init}) induces a trajectory-constant offset $B_t$ for R-GAE.
If the unknown $|C|$ is large (recall $C=V^\pi(s_0)$), then $B_t$ inflates the magnitude of $\tilde A_t$. It thereby can increase the variance of our policy-gradient estimate (see Appendix \ref{sec:emperical_GAE_baseline}). The increased variance makes the credit assignment problem harder and we therefore seek a data-dependent initialization that drives $\mathbb E_t[B_t]\approx 0$ over a collected batch. Intuitively speaking, our solution is to rank trajectories relative to each other which can be understood as anchoring. For a visual example, see Figure \ref{fig:relative_offsets}. 

\begin{figure}
	\begin{tikzpicture}[>=Latex, font=\footnotesize]
		
		% ===================== LEFT COLUMN: two plots =====================
		% BEFORE
		\begin{axis}[
			name=before,
			at={(0,0)}, anchor=south west,
			width=\PlotW, height=\PlotH,
			axis lines=left, xmin=0.0, xmax=1, ymin=-0.5, ymax=1.20,
			xtick={-1,10}, ytick={-10,0,10.0},
			tick style={black!55}, ticklabel style={black!70},
			xlabel={$t$}, ylabel={},
			title={\scshape Relative Values},
			title style={font=\scriptsize, yshift=1pt},
			clip=false
			]
			% zero line
			\addplot[SoftGray, densely dotted, domain=0:1] {0};
			% curves
			\addplot[C1, semithick, domain=0:1, samples=140] {1.30*(1 - exp(-3*x) + 0.2*sin(deg(6*x))} node[pos=1.01,anchor=south,inner sep=1pt, text=C1] {$\tau_1$};
			\addplot[C2, semithick, domain=0:1, samples=140] {0.60*(1 - exp(-4.0*x)) + 0.06*sin(deg(18*x))}
			node[pos=1.01,anchor=south,inner sep=1pt, text=C2] {$\tau_2$};
			\addplot[C3, semithick, domain=0:1, samples=140] {0.28*(1 - exp(0.4*x)) - 0.03*sin(deg(40*x))}
			node[pos=1.01,anchor=south,inner sep=1pt, text=C3] {$\tau_3$};
		\end{axis}
		
		% AFTER
		\begin{axis}[
			name=after,
			at={($(before.south west)+(0,-3.5cm)$)}, anchor=south west,
			width=\PlotW, height=\PlotH,
			axis lines=left, xmin=0.0, xmax=1, ymin=-0.5, ymax=1.40,
			xtick={-1,10}, ytick={-10,0,10},
			tick style={black!55}, ticklabel style={black!70},
			xlabel={$t$}, ylabel={},
			title={\scshape Relative Values with Offset},
			title style={font=\scriptsize, yshift=1pt},
			clip=false
			]
			\addplot[SoftGray, densely dotted, domain=0:1] {0};
			
			% shifted curves
			\addplot[C1, semithick, domain=0:1, samples=140] {(\offA) + 1.30*(1 - exp(-3*x) + 0.2*sin(deg(6*x))}
			node[pos=0.98,anchor=south,inner sep=1pt, text=C1] {$\tau_1$};
			\addplot[C2, semithick, domain=0:1, samples=140] {(\offB) + 0.60*(1 - exp(-4.0*x)) + 0.06*sin(deg(18*x))}
			node[pos=0.98,anchor=south,inner sep=1pt, text=C2] {$\tau_2$};
			\addplot[C3, semithick, domain=0:1, samples=140] {(\offC) + 0.28*(1 - exp(0.4*x)) - 0.03*sin(deg(40*x))}
			node[pos=0.98,anchor=north,inner sep=1pt, text=C3] {$\tau_3$};
			
			% save offsets (axis-local coords) for external connections
			\path (axis cs:0,\offA) coordinate (Aoff);
			\path (axis cs:0,\offB) coordinate (Boff);
			\path (axis cs:0,\offC) coordinate (Coff);
			
			% vertical offset markers at x=0
			\draw[C1, line width=0.9pt] (axis cs:0,0) -- (Aoff);
			\draw[C2, line width=0.9pt] (axis cs:0,0) -- (Boff);
			\draw[C3, line width=0.9pt] (axis cs:0,0) -- (Coff);
			\fill[C1] (Aoff) circle (1.2pt); \fill[C2] (Boff) circle (1.2pt); \fill[C3] (Coff) circle (1.2pt);
		\end{axis}
		
		\node[font=\scriptsize, black!100, anchor=west] at ($(before.north west)+(-0.9cm,0.2cm)$) {$\tilde V_\theta(s_t)$};
		
		\node[font=\scriptsize, black!100, anchor=west] at ($(before.north west)+(-0.9cm,-3.4cm)$) {$\bar V_\theta(s_t)$};
		
		%\node[font=\scriptsize, black!60, anchor=west] at ($(after.north west)+(-0.1cm,0.3cm)$) %{\scshape Relative Values with Offset};
		
		% ===================== RIGHT COLUMN: two-row pipeline =====================
		\coordinate (rightcol) at ($(before.north east)+(\ColSep,-0.2cm)$);
		
		% small panel heading
		%\node[font=\scriptsize, black!60, anchor=west] at ($(rightcol)+(0,0.15cm)$) {\scshape Baseline Derivation (3 trajectories)};
		
		% ---------- ROW 1: Delta -> m ----------
		% Matrix \Delta
		\begin{scope}[shift={($(rightcol)+(0.5cm,0.2)$)}, name=relativem]
			
			\draw[rounded corners=1.4pt, line width=0.6pt, black!70] (0,0) rectangle (3*\cell, -3*\cell);
			\foreach \i in {1,2} {
				\draw (0,-\i*\cell) -- (3*\cell,-\i*\cell);
				\draw (\i*\cell,0) -- (\i*\cell,-3*\cell);
			}
			
			\node at (0.5*\cell,-0.5*\cell) {$0$};
			\node[C1] at (1.5*\cell,-0.5*\cell) {$\Delta_{\textcolor{C1}{0}\textcolor{C2}{0}}$};
			\node[C1] at (2.5*\cell,-0.5*\cell) {$\Delta_{\textcolor{C1}{0}\textcolor{C3}{0}}$};
			\node[C2] at (0.5*\cell,-1.5*\cell) {$\Delta_{0\textcolor{C1}{0}}$};
			\node at (1.5*\cell,-1.5*\cell) {$0$};
			\node[C2] at (2.5*\cell,-1.5*\cell) {$\Delta_{0\textcolor{C3}{0}}$};
			\node[C3] at (0.5*\cell,-2.5*\cell) {$\Delta_{0\textcolor{C1}{0}}$};
			\node[C3] at (1.5*\cell,-2.5*\cell) {$\Delta_{0\textcolor{C2}{0}}$};
			\node at (2.5*\cell,-2.5*\cell) {$0$};
			
			% labels
			\node[C1] at (-0.26cm, -0.5*\cell) {$s_0$};
			\node[C2] at (-0.26cm, -1.5*\cell) {$s_0$};
			\node[C3] at (-0.26cm, -2.5*\cell) {$s_0$};
			\node[C1] at (0.5*\cell, 0.25cm) {$s_0$};
			\node[C2] at (1.5*\cell, 0.25cm) {$s_0$};
			\node[C3] at (2.5*\cell, 0.25cm) {$s_0$};
			
			\node[midway, yshift=7pt, font=\scriptsize] {$$};
			\node[midway, xshift=-10pt, font=\scriptsize, rotate=90] {$$};

			\node[font=\scriptsize, black!60, anchor=west] at ($(rightcol)+(0.3cm,0.9cm)$) {\scshape RV Differences};
			
		\end{scope}
		
		% Arrow to m (row-wise maxima) + step marker (1)
		\filldraw[white] ($(rightcol)+(0.1cm, 0.9cm)$) circle (6pt);
		\node[draw, circle, inner sep=1.6pt, black!70] at ($(rightcol)+(0.1cm, 0.9cm)$) {\scriptsize 1};
		
		\draw[-{Stealth[length=2.2mm]}, line width=0.7pt, SoftGray]
		($(rightcol)+(3*\cell+0.8cm,-0.8cm)$) -- ++(1.8cm, 0.0cm)
		node[midway, above, font=\tiny, black!70] {row mean};
		
		% Vector m
		\begin{scope}[shift={($(rightcol)+(3*\cell+2.8cm,0.2cm)$)}]
			\draw[rounded corners=1.2pt, line width=0.6pt, black!70] (0,0) rectangle (\cell+0.25cm,-3*\cell);
			\node[] at (0.7*\cell,-0.5*\cell) {$O(\textcolor{C1}{s_0})$};
			\node[] at (0.7*\cell,-1.5*\cell) {$O(\textcolor{C2}{s_0})$};
			\node[] at (0.7*\cell,-2.5*\cell) {$O(\textcolor{C3}{s_0})$};
			\path (0.5*\cell,-1.5*\cell) coordinate (mcenter);
		\end{scope}
		
		% Vertical arrow: mean--min centering (step 2)
		\filldraw[white] ($(rightcol)+(3*\cell+2.15cm-0.32cm,0.9cm)$) circle (6pt);
		\node[draw, circle, inner sep=1.6pt, black!70] at ($(rightcol)+(3*\cell+2.15cm-0.32cm,0.9cm)$) {\scriptsize 2};
		
		\node[font=\scriptsize, black!60, anchor=west] at ($(rightcol)+(3*\cell+2.15cm-0.1cm,0.9cm)$) {\scshape Estimate Offset};
		
		\coordinate (rowtwo) at ($(rightcol)+(0,-3*\cell-\RowGap)$);
		%\draw[-{Stealth[length=2.2mm]}, line width=0.7pt, SoftGray](mcenter) -- ($(rowtwo)+(1.5*\cell,0)$);
		%node[midway, right, xshift=4pt, font=\scriptsize, black!70, rotate=90] {mean--min centering};
		
		% ---------- ROW 2: hatDelta -> Vhat0 ----------
		% \hat{\Delta}
		% Vector m
		\begin{scope}[shift={($(rowtwo)+(0.5cm,0.3cm)$)}]
			\draw[rounded corners=1.2pt, line width=0.6pt, black!70] (0,0) rectangle (\cell+0.4cm,-3*\cell);
			\node[] at (0.7*\cell+2,-0.5*\cell) {$\widehat V_\theta(\textcolor{C1}{s_0})$};
			\node[] at (0.7*\cell+2,-1.5*\cell) {$\widehat V_\theta(\textcolor{C2}{s_0})$};
			\node[] at (0.7*\cell+2,-2.5*\cell) {$\widehat V_\theta(\textcolor{C3}{s_0})$};
			\path (0.5*\cell,-1.5*\cell) coordinate (mcenter);
			
			\node[black!80, align=center] at (1.5*\cell+ 1.8cm, -3*\cell-0.6cm)
			{Refinement: \\ $\widehat V_\theta(\textcolor{C1}{s_{0}}) := O(\textcolor{C1}{s_{0}}) - \min\big(O(\textcolor{C1}{s_{0}}),O(\textcolor{C2}{s_{0}}),O(\textcolor{C3}{s_{0}})\big)$};
		\end{scope}
		
%		\begin{scope}[shift={($(rowtwo)+(0.5cm,0.3cm)$)}]
%			\draw[rounded corners=1.4pt, line width=0.6pt, black!70] (0,0) rectangle (3*\cell, -3*\cell);
%			\foreach \i in {1,2} {
%				\draw (0,-\i*\cell) -- (3*\cell,-\i*\cell);
%				\draw (\i*\cell,0) -- (\i*\cell,-3*\cell);
%			}
%			\node[C1] at (0.5*\cell,-0.5*\cell) {$\hat\Delta_{11}$};
%			\node[C2] at (1.5*\cell,-0.5*\cell) {$\hat\Delta_{12}$};
%			\node[C3] at (2.5*\cell,-0.5*\cell) {$\hat\Delta_{13}$};
%			\node[C1] at (0.5*\cell,-1.5*\cell) {$\hat\Delta_{21}$};
%			\node[C2] at (1.5*\cell,-1.5*\cell) {$\hat\Delta_{22}$};
%			\node[C3] at (2.5*\cell,-1.5*\cell) {$\hat\Delta_{23}$};
%			\node[C1] at (0.5*\cell,-2.5*\cell) {$\hat\Delta_{31}$};
%			\node[C2] at (1.5*\cell,-2.5*\cell) {$\hat\Delta_{32}$};
%			\node[C3] at (2.5*\cell,-2.5*\cell) {$\hat\Delta_{33}$};
%			
%			\node[font=\tiny, black!80, align=center] at (1.5*\cell+ 0.5cm, -3*\cell-0.6cm)
%			{$\hat\Delta_{ij}=\begin{cases}\Delta_{ij}-m_i,&\Delta_{ij}>0,\\[2pt] m_i-\Delta_{ij},&\text{else.}\end{cases}$};
%		\end{scope}
		
		% Arrow to column mean + step marker (3)
		\filldraw[white] ($(rowtwo)+(0.1cm,1.0*\cell-0.0cm)$) circle (6pt);
		\node[draw, circle, inner sep=1.6pt, black!70] at ($(rowtwo)+(0.1cm,1.0*\cell-0.0cm)$) {\scriptsize 3};
		\node[font=\scriptsize, black!60, anchor=west] at ($(rowtwo)+(0.3cm,1.0*\cell-0.0cm)$) {\scshape Refine Offset};

		\draw[-{Stealth[length=2.2mm]}, line width=0.7pt, SoftGray]
		($(rowtwo)+(3*\cell-0.3cm,-1.5*\cell+0.2cm)$) -- ++(1.0cm,0)
		node[midway, above, font=\tiny, black!70] {use};
		
		% Vector \hat V_0 (step 4)
		\begin{scope}[shift={($(rowtwo)+(3*\cell+0.8cm, 0.3cm)$)}]
			\draw[rounded corners=1.2pt, line width=0.6pt, black!70] (0,0) rectangle (\cell+87,-3*\cell);
			\node[font=\scriptsize] at (1.0*\cell+35,-0.5*\cell) {$\bar V_\theta(\textcolor{C1}{s_t}) := \widehat V_\theta(\textcolor{C1}{s_{0}}) + \Delta_\theta(\textcolor{C1}{s_t}, \textcolor{C1}{s_0})$};
			\node[font=\scriptsize] at (1.0*\cell+35,-1.5*\cell) {$\bar V_\theta(\textcolor{C2}{s_t}) := \widehat V_\theta(\textcolor{C2}{s_{0}}) + \Delta_\theta(\textcolor{C2}{s_t}, \textcolor{C2}{s_0})$};
			\node[font=\scriptsize] at (1.0*\cell+35,-2.5*\cell) {$\bar V_\theta(\textcolor{C3}{s_t}) := \widehat V_\theta(\textcolor{C3}{s_{0}}) + \Delta_\theta(\textcolor{C3}{s_t}, \textcolor{C3}{s_0})$};
			%\node[font=\scriptsize, black!80, align=center]
			%at (0.5*\cell,-3*\cell-0.35cm) {$\hat V_0(s_j)=\tfrac{1}{3}\sum_{i=1}^3 \hat\Delta_{ij}$};
			\path (0.5*\cell,-0.5*\cell) coordinate (vA);
			\path (0.5*\cell,-1.5*\cell) coordinate (vB);
			\path (0.5*\cell,-2.5*\cell) coordinate (vC);
		\end{scope}
		
		\filldraw[white] ($(rowtwo)+(3*\cell+2.15cm-0.32cm,1.0*\cell-0.0cm)$) circle (6pt);
		\node[draw, circle, inner sep=1.6pt, black!70] at ($(rowtwo)+(3*\cell+2.15cm-0.32cm,1.0*\cell-0.0cm)$) {\scriptsize 4};
		
		\node[font=\scriptsize, black!60, anchor=west] at ($(rowtwo)+(3*\cell+2.15cm-0.1cm,1.0*\cell-0.0cm)$) {\scshape Add Offset};
		
		% ===================== Connections to offsets =====================
		%\draw[C1, dashed, line width=0.8pt] (vA) to[out=180,in=0] (Aoff);
		%\draw[C2, dashed, line width=0.8pt] (vB) to[out=180,in=0] (Boff);
		%\draw[C3, dashed, line width=0.8pt] (vC) to[out=180,in=0] (Coff);
		
		% ===================== Panel framing (subtle) =====================
		\coordinate (rightTop) at ($(rightcol)+(0,0.55cm)$);
		\coordinate (rightBot) at ($(rowtwo)+(3*\cell+3.15cm+\cell, -3*\cell-0.55cm)$);
		
		\begin{pgfonlayer}{background}
			% Left panel
			%\node[fill=black!3, rounded corners=2pt, inner sep=6pt, draw=black!12, fit=(before) (after)] {};
			\draw[fill=black!3, draw=black!12, rounded corners=2pt]
			(-1,-4) rectangle (5.5,2.6); % lower-left and upper-right corners
			% Right panel
			%\node[fill=black!3, rounded corners=2pt, inner sep=6pt, draw=black!12, fit=(rightTop) (rightBot)] {};
			\draw[fill=black!3, draw=black!12, rounded corners=2pt]
			(6,-4) rectangle (12.7,2.6); % lower-left and upper-right corners
		\end{pgfonlayer}
		
	\end{tikzpicture}
	\caption{\textbf{Trajectory Ranking.} When training batches contain samples from more than one episode, the initialization $\tilde V(s_0)=0$ for each trajectory $\tau_i$ is not correct. The trajectories need to be ranked relative to each other by adding an offset that is calculated with $\Delta(s_i,s_j)$. Note that \textcolor{C1}{$s_0$}, \textcolor{C2}{$s_0$}, \textcolor{C3}{$s_0$} are start states of \textcolor{C1}{$\tau_1$}, \textcolor{C2}{$\tau_2$}, \textcolor{C3}{$\tau_3$} indicated by color. For simplicity, assume in this figure that start states are only present at $t=0$ for each rollout, so we can think about each $\tau$ as one episode.}
	\label{fig:relative_offsets}
\end{figure}

\subsection{Trajectory Ranking}
\label{sec:traj-ranking}

Given $M$ training rollouts $\{\tau^{(m)}\}_{m=1}^M$ with states $\tau^{(m)}=(s^{(m)}_0,\ldots,s^{(m)}_{T})$ and done flags 
$\{d^{(m)}_t\}_{t=0}^{T-1}$ indicating whether $s^{(m)}_{t+1}$ begins a new episode. Define
\begin{equation}
	\mathcal{K}^{(m)} := \{0\}\,\cup\,\{\,t\in\{1,\ldots,T\}: d^{(m)}_{t-1}=1\,\},
\end{equation}
as the index set where either a sub-episode or new episode starts. In the following, we refer to these selected states as \emph{start states}.
Enumerate all $N=\sum_m|\mathcal{K}^{(m)}|$ start states as $\{s^{(n)}_{\text{start}}\}_{n=1}^N$, where each $s^{(n)}_{\text{start}}=s^{(m)}_k$ for some $k\in\mathcal{K}^{(m)}$.
Let $\Delta_\theta:\mathcal{S}\times\mathcal{S}\to\mathbb{R}$ be the learned (noisy) difference model and define the $N\times N$ matrix of pairwise differences
\begin{equation}
	\Delta_{ij} := \Delta_\theta\!\left(s^{(i)}_{\text{start}},\,s^{(j)}_{\text{start}}\right),\qquad 1\le i,j\le N.
\end{equation}
To obtain an offset estimation that is robust to prediction noise and is identifiable up to a batchwise constant (which is sufficient for ranking), we estimate all start state offsets $O$ via row-wise averaging and then subtract the batch minimum to get non-negative values with ranking
\begin{equation}
	O(s^{(n)}_{\text{start}}) := \frac{1}{N}\sum_{j=1}^N \Delta_{nj},
	\quad \quad 
	\widehat V_\theta(s^{(n)}_{\text{start}}) := O(s^{(n)}_{\text{start}}) - \min_{1\le \ell \le N} O(s^{(\ell)}_{\text{start}})\;.
\end{equation}

Finally and to avoid overly complex notation, for any state $s$ in a rollout, use the most recent \emph{start state} in that same rollout, call it $s_{\text{start}}$.
Set the relative value for $s$ as
\begin{equation}
	\bar V_\theta(s) \;=\; \widehat V_\theta(s_{\text{start}})+\Delta_\theta(s,s_{\text{start}}),
	\label{eq:offset_init}
\end{equation}
and use these values for R-GAE. In that way, each state gets its episode-specific offset.

\section{Training Objective}
The PPO objective uses relative advantages $\tilde A_t$ and the usual clipped surrogate:
\begin{equation}
	\mathcal L_{\text{policy}}(\theta)=
	\mathbb E_t\big[\min(r_t(\theta)\tilde A_t,\; \mathrm{clip}(r_t(\theta),1-\epsilon,1+\epsilon)\tilde A_t)\big],
	\quad
	r_t(\theta)=\frac{\pi_\theta(a_t\mid s_t)}{\pi_{\text{old}}(a_t\mid s_t)}.
\end{equation}
The critic loss and entropy bonus are
\begin{equation}
	\label{eq:rv_critic_loss}
	\mathcal L_{\text{critic}}(\theta)=\mathbb E_{(i,j)\sim\mu}\big[\big(\Delta_\theta(s_i,s_j)-y_{ij}^{(n)}\big)^2\big],\qquad
	\mathcal L_{\text{ent}}(\theta)=-\mathbb E_t[H(\pi_\theta(\cdot\mid s_t))],
\end{equation}
where we use the n-step value target (see Eq. \ref{eq:n-step-target}) and finally optimize the combined loss
\begin{equation}
	\mathcal L(\theta) = -\mathcal L_{\text{policy}}(\theta) + c_v \mathcal L_{\text{critic}}(\theta) + c_e \mathcal L_{\text{ent}}(\theta).
\end{equation}
\subsection{Network Architecture}
For all Atari experiments we use the exact same architecture as PPO \cite{schulman2017proximal}.
The policy and relative value function share the CNN encoder $f_{\rm enc}(s)\in\mathbb{R}^d$ and then split their computation by using a single linear layer for their respective outputs.

\paragraph{Relative Critic.}
Relative values are obtained by applying the same encoder 
$f_{\rm enc}(s)$ to both states and projecting the difference of their embeddings. We do not use an additional target encoder or stop-gradients. Formally, the value difference for a state pair $(s_i,s_j)$ is given by 
\begin{equation}
	\Delta_{\theta}(s_i,s_j) = \Phi\bigl(\,f_{\rm enc}(s_i) - f_{\rm enc}(s_j)\bigr),
\end{equation}
where $\Phi$ is the projection head. For fair comparisons, our experiments use a single learned vector $w \in \mathbb R^{d}$ without bias term to ensure antisymmetry $\Delta_\theta(s_i,s_j) = - \Delta_\theta(s_j,s_i)$ and $\Delta_\theta(s_i,s_i) = 0$ by design. It is also feasible to build $\Phi$ as an non-linear MLP that is antisymmetric in nature (e.g. by using tanh activations and no bias term in linear layers). But so far, we have not observed additional improvements by using such non-linear heads.

\section{Experiments}

We evaluate \emph{Relative Value Learning} (RV) as a drop-in critic for on-policy policy-gradient methods on the Arcade Learning Environment (ALE) for Atari. Observations follow the standard PPO preprocessing: random no-op resets, frame skip with max-over-two, grayscale resizing to $84{\times}84$, stacking $m{=}4$ frames, and input scaling to $[0,1]$. Networks use orthogonal initialization with a small policy logit scale ($0.01$) and unit scale for the value head, matching widely used PPO implementations. We run with EnvPool~\cite{weng2022envpool} for high-throughput environment simulation and train for $40$M frames ($10$M environment steps). For each game we use $10$ independent seeds and report performance as the average score over the last $100$ training episodes. All hyperparameters stay identical over 49 games and are reported in Appendix \ref{sec:hyperparameters}.
\begin{table}[H]%htbp
	\tiny
	\centering
	\caption{\textbf{PPO+RV (ours) is competitive with PPO and DAE.} Mean final scores (last 100 episodes) with standard deviation of PPO, DAE and our method (PPO+RV) after 40\,M game frames.}
	\label{tab:atari_final_scores}
	\begin{tabular}{
			l
			S[table-format=7.1,
			table-space-text-post={\,$\pm$\,68362.1}]
			S[table-format=7.1,
			table-space-text-post={\,$\pm$\,683262.1}]
			S[table-format=7.1,
			table-space-text-post={\,$\pm$\,683262.1}]
			%S[table-format=7.1]
			%S[table-format=7.1]
		}
		\toprule
		Game               & {PPO } \citep{schulman2017proximal} & {DAE \citep{pan2022direct}} & {PPO + RV (ours)} \\
		\midrule
		Alien                  &  1850.3$\pm$376.8    & 1372.7$\pm$349.3 & 1748.8$\pm$408.7\\
		Amidar                 &  674.6$\pm$108.5   & 394.6$\pm$121.9 &  504.7$\pm$108.8\\
		Assault                &  4971.9$\pm$642.4   & 2205.1$\pm$362.3  & 3901.7$\pm$219.2\\
		Asterix                &  4532.5$\pm$1570.7   & 3750.1$\pm$522.9 & 3862.2$\pm$678.2\\
		Asteroids              &  2097.5$\pm$88.8   & 1392.3$\pm$62.3 & 1489.5$\pm$91.5\\
		Atlantis               &  2311815.0$\pm$420555.5 & 2888011.1$\pm$225889.1	 & 3101686.1$\pm$451117.9\\
		BankHeist              &  1280.6$\pm$2.7   & 257.8$\pm$193.2 & 1209.0$\pm$77.5\\
		BattleZone             &  17366.7$\pm$1045.0  & 16302.0$\pm$2042.5 & 21780.0$\pm$1516.1\\
		BeamRider              &  1590.0$\pm$276.3    & 1729.9$\pm$172.7  & 2044.0$\pm$396.3\\
		Bowling                &  40.1$\pm$13.8	   & 36.1$\pm$9.1 & 46.3$\pm$7.9\\
		Boxing                 &  94.6$\pm$0.9 	   & 25.9$\pm$9.5  & 94.8$\pm$0.8\\
		Breakout               &  274.8$\pm$20.2     & 234.9$\pm$28.2		 & 263.0$\pm$27.7\\
		Centipede              &  4386.4$\pm$165.6   & 3915.8$\pm$694.4 & 1226.1$\pm$104.5\\
		ChopperCommand         &  3516.3$\pm$991.6   & 1587.3$\pm$423.6 & 4435.2$\pm$965.5\\
		CrazyClimber           &  110202.0$\pm$3547.7 & 112319.5$\pm$6125.4 & 111622.2$\pm$5645.4\\
		DemonAttack            &  11378.4$\pm$2800.5  & 2477.2$\pm$344.3 & 8944.7$\pm$534.3\\
		DoubleDunk             &  -14.9$\pm$1.3    & -12.5$\pm$3.0 & -23.3$\pm$5.3\\
		Enduro                 &  758.3$\pm$31.2    & 0.0$\pm$0.0 & 1080.2$\pm$121.1\\
		FishingDerby           &  17.8$\pm$2.8     & -60.4$\pm$7.2 & 19.8$\pm$5.2\\
		Freeway                &  32.5$\pm$0.3     &  19.5$\pm$13.7 & 31.9$\pm$0.2\\
		Frostbite              &  314.2$\pm$4.9    &  367.9$\pm$278.2 & 522.6$\pm$399.0\\
		Gopher                 &  2932.9$\pm$1870.6   &  1137.2$\pm$156.8 & 3719.1$\pm$249.5\\
		Gravitar               &  737.2$\pm$150.9    &  443.5$\pm$50.3 & 1441.0$\pm$561.4\\
		IceHockey              &  -4.2$\pm$0.3     &  -5.1$\pm$0.5	 & -3.9$\pm$0.4\\
		Jamesbond              &  560.7$\pm$94.9    &  507.8$\pm$21.7 & 568.9$\pm$43.7\\
		Kangaroo               &  9928.7$\pm$7089.9   &  1331.0$\pm$1060.8 & 5649.0$\pm$2940.0\\
		Krull                  &  7942.3$\pm$555.1    &  9034.0$\pm$774.6 & 8870.9$\pm$473.1\\
		KungFuMaster           &  23310.3$\pm$2251.9   &  20535.3$\pm$2810.7 & 24483.6$\pm$6794.6\\
		MontezumaRevenge       &  42.0$\pm$57.4      &  0.1$\pm$0.3 & 1.4$\pm$3.4\\
		MsPacman               &  2096.5$\pm$157.1   &  2501.0$\pm$600.9 & 1721.2$\pm$230.1\\
		NameThisGame           &  6254.9$\pm$160.9   &  6016.3$\pm$283.3 & 6685.3$\pm$938.3\\
		Pitfall                &  -32.9$\pm$19.0    &  -12.0$\pm$23.1 & -27.3$\pm$26.3\\
		Pong                   &  20.7$\pm$0.2     &  20.7$\pm$0.3 & 16.8$\pm$3.1\\
		PrivateEye             &  69.5$\pm$55.2     &  86.0$\pm$14.9 & 93.3$\pm$17.6\\
		Qbert                  &  14293.3$\pm$293.1  &  11119.6$\pm$3465.8 & 15261.6$\pm$502.1\\
		Riverraid              &  8393.6$\pm$385.4   &  3150.8$\pm$549.9 & 9465.8$\pm$1062.5\\
		RoadRunner             &  25076.0$\pm$10851.2  &  16146.3$\pm$3074.0 & 43346.3$\pm$6741.1\\
		Robotank               &  5.5$\pm$0.7      &  6.9$\pm$2.7 & 19.5$\pm$3.1\\
		Seaquest               &  1204.5$\pm$481.1   &  2049.8$\pm$430.3 & 1659.7$\pm$209.4\\
		SpaceInvaders          &  942.5$\pm$266.1    &  914.9$\pm$219.4 & 712.3$\pm$134.5\\
		StarGunner             &  32689.0$\pm$949.8  &  5849.2$\pm$740.4 & 24830.8$\pm$10990.8\\
		Tennis                 &  -14.8$\pm$3.8    &  -17.6$\pm$0.9 & -31.7$\pm$3.4\\
		TimePilot              &  4342.0$\pm$331.1   &  7252.7$\pm$1173.6 & 10212.7$\pm$492.0\\
		Tutankham              &  254.4$\pm$42.5    &  196.0$\pm$26.2 & 224.2$\pm$23.1\\
		UpNDown                &  95445.0$\pm$21584.1  &  85438.4$\pm$19664.6 & 113991.7$\pm$21370.6\\
		Venture                &  0.0$\pm$0.0      &  0.0$\pm$0.0 & 5.5$\pm$15.6\\
		VideoPinball           &  37389.0$\pm$15876.6  &  23958.6$\pm$3936.0 & 138564.8$\pm$77425.7\\
		WizardOfWor            &  4185.3$\pm$1029.4   &  4161.3$\pm$696.0 & 5084.1$\pm$522.6 \\
		Zaxxon                 &  5008.7$\pm$1261.3   &  5612.2$\pm$1712.5 & 845.8$\pm$1567.9\\
		\bottomrule
	\end{tabular}
\end{table}
\paragraph{Compute Resources.}
Each $40$M-frame run completes in approximately $65$ minutes on a single A100 GPU with $12$ CPU cores. Across all $49$ games and $10$ seeds ($490$ runs total), this corresponds to $530$ GPU-hours or 22.10 A100-days.

\begin{figure}[H]
	\centering
	\includegraphics[width=141mm]{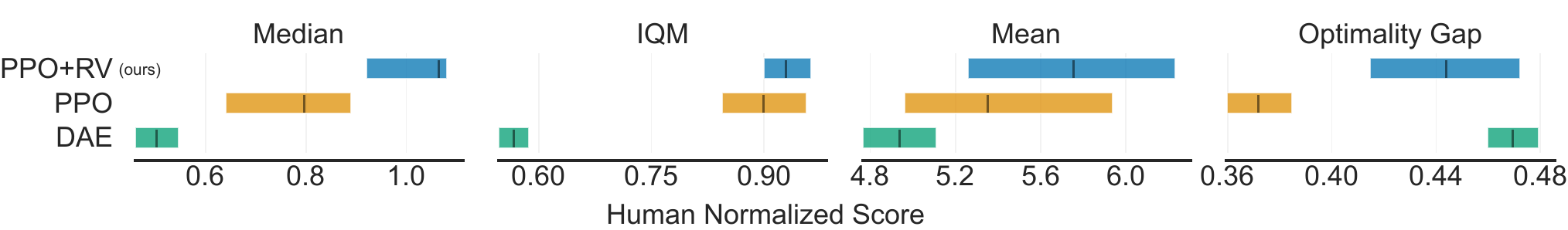}
	\caption{\textbf{Comparison between Methods.} Aggregate metrics with 95\% stratified bootstrap confidence intervals \citep{agarwal2021deep}. Higher median, interquartile mean (IQM), and mean, but lower
		optimality gap indicate better performance.}
	\label{fig:metrics}
\end{figure}
\subsection{Arcade Learning Environment (ALE)}
Table~\ref{tab:atari_final_scores} presents the per-game scores of PPO \citep{schulman2017proximal} and Direct Advantage Estimation (DAE) \citep{pan2022direct} and extends the table with an additional column, \textsc{PPO+RV}. This column represents our algorithm where the absolute value critic is replaced by the RV critic. Across the benchmark, \textsc{PPO+RV} attains competitive performance: it exceeds PPO on 30 out of 49 games (or 61\%) and DAE on 37 out of 49 games (or 75\%). In addition we report aggregated metrics in Figure \ref{fig:metrics} for further analysis.
\subsection{Ablation Study}
Furthermore, we present the influence of relative value initialization in Figure \ref{fig:ablation_value_init}. In general, we see that relative ranking is needed, but sometimes actors are not bothered by the influence of $B_t$ which we speculate is due to the PPO clipping objective.
\begin{figure}
	\centering
	\includegraphics[width=132mm]{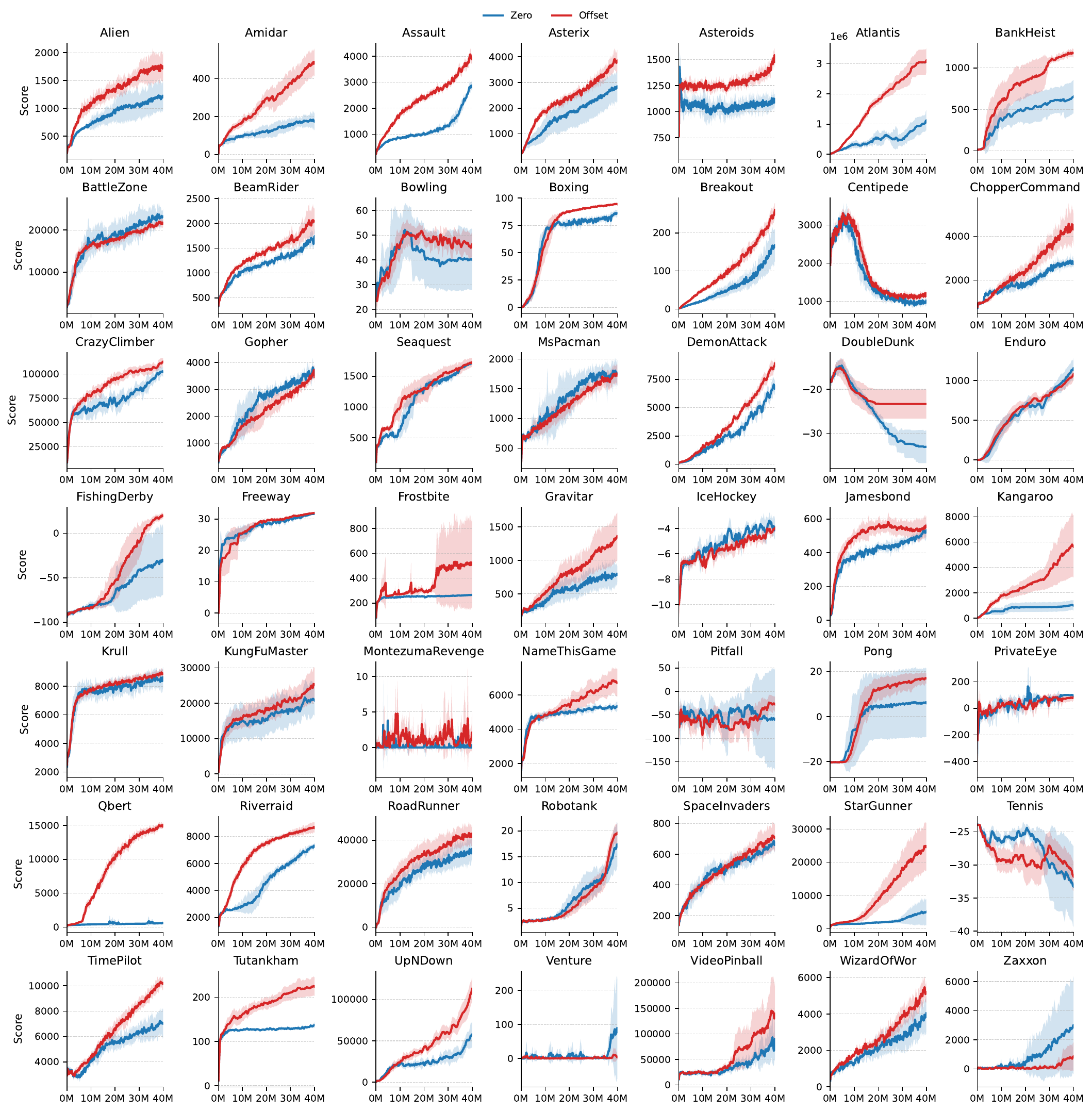}
	\caption{\textbf{Ablation for Value Initialization.} This figure compares the performance difference between zero (see Equation \ref{eq:val_init}) and offset initialization (see Equation \ref{eq:offset_init}) for relative values. Usually the proposed offset initialization is needed to improve credit assignment, but for some games the algorithm can handle zero initialization as well.}
	\label{fig:ablation_value_init}
\end{figure}
%\vspace{-3mm}
\section{Limitations}
%\vspace{-2mm}
Gauge fixing induces a trajectory-constant baseline $B_t$ in R-GAE that inflates variance for long horizons or when $\gamma\lambda\!\to\!1$, but it is only partially reduced by relative value initialization. The enforced antisymmetry improves stability but restricts critic expressivity (our near-linear difference head may underfit complex value differences). Pairwise training is $O(B^2)$, where $B$ is the batch size, in the naive form and relies on subsampling that trades compute for estimator variance. Trajectory ranking used for initialization assumes post-baseline values are on a comparable scale across trajectories. Ranking signal may become uninformative. Empirically, experiments are limited to PPO on discrete-action Atari. Generality to continuous control, off-policy regimes, and preference-based settings remains to be demonstrated.

\section{Conclusion}
\vspace{-2mm}
In this paper we established that reinforcement learning can operate on value differences rather than absolute values. We proposed \emph{Relative Value Learning} (RV), a gauge-invariant alternative to absolute critics that learns antisymmetric value differences $\Delta^\pi(s_i,s_j)=V^\pi(s_i)-V^\pi(s_j)$ as a primitive representation. On the theoretical side, we defined a pairwise Bellman operator that is a $\gamma$-contraction on bounded antisymmetric functions with a unique fixed point equal to the true value differences, and we derived well-posed bootstrapping targets (1-step/$n$-step/$\lambda$) that operate entirely on observable reward differences and non terminal pairwise terms. We further introduced trajectory ranking and reconstructed generalized advantage estimation from pairwise differences (R-GAE), showing that it ensures an unbiased policy-gradient estimator and clarifying its relationship to standard GAE. Empirically, replacing the PPO critic with our relative critic gives competitive performance across Atari while simplifying the value learning objective to the quantities that matter for control: differences rather than absolutes. We hope RV serves as a building block for algorithms that reason natively in the relative domain where decisions are actually made.

\section{Acknowledgments}
This research has been funded and supported by the Federal Ministry of Research, Technology and Space of Germany and the state of North Rhine-Westphalia as part of the Lamarr Institute for Machine Learning and Artificial Intelligence.

\bibliography{references}
\bibliographystyle{iclr2026_conference}

\appendix

\newpage
\section{Target Derivation for 1-Step Bootstrapping }
\label{sec:rv_target_derivation}

When approximating the pairwise Bellman operator from Equation~\ref{eq:pairwise-bellman-operator} by using two randomly sampled transitions
\[
\tau_i = (s_i, a_i, r_i, d_i, s_{i+1}), \quad \tau_j = (s_j, a_j, r_j, d_j, s_{j+1}),
\]
we can encounter terminal successor states that result in ill-defined targets for $\Delta(s_{i+1}, s_{j+1})$. We present four exhaustive cases that arise from the possible combinations of terminal and non-terminal successor states where $d_i,d_j \in \{0,1\}$ are terminal state indicators for $s_{i+1}$ and $s_{j+1}$. In each case, we derive an equivalent expression for $\Delta(s_{i+1}, s_{j+1})$ by only using (1) observable rewards $r_i$, $r_j$ and (2) non-terminal, well-defined value differences $\Delta_\theta(\cdot,\cdot)$. The rearrangement is then used for the bootstrapping target in Equation (\ref{eq:rv-1step-cases}) to work for sampling.

\paragraph{Case 1:} Both successor states are non-terminal ($d_i = 0, d_j = 0$). Then, the relative value 
\begin{equation}
	\Delta(s_{i+1}, s_{j+1}) = V(s_{i+1}) - V(s_{j+1}),
\end{equation}
is well-posed and directly expressible with $\Delta_\theta(s_{i+1}, s_{j+1})$.

\paragraph{Case 2:} Assume the successor $s_{i+1}$ is terminal, and $s_{j+1}$ is non-terminal ($d_i = 1, d_j = 0$). To address this, we derive a modified target:
\begin{equation}
	\begin{aligned}
		\Delta(s_{i+1},s_{j+1}) &= V(s_{i+1}) - V(s_{j+1})\\
		&=
		V(s_{i+1}) - V(s_{j+1}) + V(s_i) - V(s_i)\\
		&=
		\Delta(s_i,s_{j+1}) - V(s_i) + \underbrace{V(s_{i+1}) }_{ = 0} \\
		&=
		\Delta(s_i,s_{j+1}) - \mathbb{E}_{s_i}\left[ R_t + \gamma R_{t+1} + \dots \right]\\
		&\overset{\text{using} \: \tau_i}{\approx}
		\Delta(s_i,s_{j+1}) - r_i
	\end{aligned}
\end{equation}

\paragraph{Case 3:} Assume the successor $s_{i+1}$ is non-terminal, and is $s_{j+1}$ terminal ($d_i = 0, d_j = 1$). To address this, we derive a modified target:
\begin{equation}
	\begin{aligned}
		\Delta(s_{i+1},s_{j+1}) &= V(s_{i+1}) - V(s_{j+1})\\
		&=
		V(s_{i+1}) - V(s_{j+1}) + V(s_j) - V(s_j)\\
		&=
		\Delta(s_{i+1},s_j) + V(s_j) - V(s_{j+1}) \\
		&=
		\Delta(s_{i+1},s_j) + \mathbb{E}_{s_j}\left[ R_t + \gamma R_{t+1} + \dots \right]\\
		&\overset{\text{using} \: \tau_j}{\approx}
		\Delta(s_{i+1},s_j) + r_j
	\end{aligned}
\end{equation}

\paragraph{Case 4:} Assume that both successors are terminal ($d_i = 1, d_j = 1$). We derive a modified target:
\begin{equation}
	\begin{aligned}
		\Delta(s_{i+1},s_{j+1}) &= V(s_{i+1}) - V(s_{j+1})\\
		&=
		V(s_{i+1}) - V(s_{j+1}) + V(s_i) - V(s_i) + V(s_j) - V(s_j)\\
		&=
		\Delta(s_i,s_j) - \mathbb{E}_{s_i}\left[ R_t + \gamma R_{t+1} + \dots \right] + \mathbb{E}_{s_j}\left[ R_t + \gamma R_{t+1} + \dots \right] \\
		&\overset{\text{using} \: \tau_i,\tau_j}{\approx}
		\Delta(s_i,s_j) - r_i + r_j \\
	\end{aligned}
\end{equation}

For better training stability, we simply say that
\begin{equation}
	\begin{aligned}
		\Delta(s_{i+1},s_{j+1}) &= V(s_{i+1}) - V(s_{j+1})\\
		&=
		0\\	
	\end{aligned}
\end{equation}
since the value of both terminal states, by using MDP definition, is equal to zero.

\newpage
\section{Gauge freedom}
\begin{lemma}[Constant-offset gauge invariance]\label{lem:constant-gauge}
	Suppose $V^\pi$ satisfies the Bellman equation $V^\pi~=~r_\pi + \gamma P^\pi V^\pi$.
	For any $c\in\mathbb{R}$, define the shaped reward and shifted value
	\begin{equation}
		r'_\pi(s) \;:=\; r_\pi(s) + (1-\gamma)c,
		\qquad
		V^{\pi\prime}(s) \;:=\; V^\pi(s) + c.
	\end{equation}
	Then $V^{\pi\prime}$ satisfies the transformed Bellman equation
	\begin{equation}
		V^{\pi\prime} \;=\; r'_\pi + \gamma P^\pi V^{\pi\prime}.
	\end{equation}
	Moreover, if one equivalently defines the reward shaping $r'(s,a)=r(s,a)+(1-\gamma)c$ so that
	$r'_\pi(s)=\mathbb{E}_{a\sim\pi(\cdot\mid s)}[r'(s,a)]$, then action preferences and pairwise
	differences are invariant:
	\begin{equation}
		A^{\pi\prime}(s,a)=A^\pi(s,a)
		\quad\text{and}\quad
		\Delta^{\pi\prime}(s_i,s_j)=\Delta^\pi(s_i,s_j)\ \text{for all }(s_i,s_j)\in S\times S.
	\end{equation}
\end{lemma}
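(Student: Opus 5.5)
The plan is a direct algebraic verification in three parts: the shifted Bellman equation, invariance of pairwise differences, and invariance of advantages. The facts I will use throughout are that $P^\pi$ is a Markov (row-stochastic) operator, so $P^\pi \mathbf{1} = \mathbf{1}$, and hence $P^\pi(V^\pi + c\mathbf{1}) = P^\pi V^\pi + c\mathbf{1}$.

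\emph{Step 1 (transformed Bellman equation).} Starting from the right-hand side and substituting the definitions of $r'_\pi$ and $V^{\pi\prime}$,
\[
r'_\pi + \gamma P^\pi V^{\pi\prime} = r_\pi + (1-\gamma)c + \gamma P^\pi V^\pi + \gamma c = \bigl(r_\pi + \gamma P^\pi V^\pi\bigr) + c = V^\pi + c = V^{\pi\prime}.
\]
Since $\gamma<1$ and $r'$ is bounded, the Bellman operator for $r'_\pi$ is a contraction. Its unique fixed point is therefore the value function of $\pi$ under the shaped reward, so $V^{\pi\prime}$ really is that value function and not merely some solution. The same conclusion can be checked from the series definition, using $\sum_t \gamma^t (1-\gamma)c = c$.

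\emph{Step 2 (pairwise differences).} This is immediate: $\Delta^{\pi\prime}(s_i,s_j) = (V^\pi(s_i)+c) - (V^\pi(s_j)+c) = \Delta^\pi(s_i,s_j)$. Alternatively, one can note that $\Delta r'^\pi = \Delta r^\pi$. The pairwise operator $T_\pi$ of Theorem~\ref{thm:contraction} is then unchanged, so its unique fixed point is unchanged.

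\emph{Step 3 (advantages).} Using the shaped reward $r'(s,a) = r(s,a) + (1-\gamma)c$, compute
\[
Q^{\pi\prime}(s,a) = r'(s,a) + \gamma\,\mathbb{E}_{s'\sim P(\cdot\mid s,a)}\bigl[V^{\pi\prime}(s')\bigr] = Q^\pi(s,a) + (1-\gamma)c + \gamma c = Q^\pi(s,a) + c.
\]
Subtracting $V^{\pi\prime}(s) = V^\pi(s) + c$ then gives $A^{\pi\prime} = A^\pi$.

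The only step needing care is the identification in Step 1, namely that the shifted function is the actual value function of the shaped problem. Uniqueness of the Bellman fixed point for $\gamma<1$ settles it. The remaining steps are routine cancellations.
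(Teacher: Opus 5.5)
Your proof is correct and follows essentially the same route as the paper. You substitute directly to get the shifted Bellman equation, compute $Q^{\pi\prime}=Q^\pi+c$ to obtain $A^{\pi\prime}=A^\pi$, and cancel directly for $\Delta^{\pi\prime}=\Delta^\pi$. Your extra appeal to uniqueness of the Bellman fixed point (or to the series identity $\sum_t\gamma^t(1-\gamma)c=c$) is a welcome addition: the paper uses tacitly, when it writes $Q^{\pi\prime}(s,a)=r'(s,a)+\gamma\,\mathbb{E}_{s'}[V^{\pi\prime}(s')]$, that $V^\pi+c$ is the actual value function of the shaped problem, and your argument is what justifies this.
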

\begin{proof}
	By direct calculation,
	\begin{align*}
		r'_\pi + \gamma P^\pi V^{\pi\prime}
		&= \bigl(r_\pi + (1-\gamma)c\bigr) + \gamma P^\pi(V^\pi + c) \\
		&= \bigl(r_\pi + \gamma P^\pi V^\pi\bigr) + \bigl((1-\gamma)c + \gamma c\bigr) \\
		&= V^\pi + c \\
		&= V^{\pi\prime},
	\end{align*}
	so $V^{\pi\prime}$ satisfies the transformed Bellman equation. For invariance:
	with $r'(s,a)=r(s,a)+(1-\gamma)c$,
	\begin{equation}
		Q^{\pi\prime}(s,a) = r'(s,a) + \gamma\,\mathbb{E}_{s'}[V^{\pi\prime}(s')]
		= \bigl(r(s,a)+\gamma\,\mathbb{E}_{s'}[V^\pi(s')]\bigr) + c
		= Q^\pi(s,a) + c.
	\end{equation}
	Thus $A^{\pi\prime}(s,a)=Q^{\pi\prime}(s,a)-V^{\pi\prime}(s)=A^\pi(s,a)$, and
	$\Delta^{\pi\prime}(s_i,s_j)=(V^\pi(s_i)+c)-(V^\pi(s_j)+c)=\Delta^\pi(s_i,s_j)$.
\end{proof}

Constant offsets are a special case of potential-based shaping \citep{ng1999policy}.

\newpage
\section{Variance Analysis of the Relative Policy Gradient}
\label{sec:emperical_GAE_baseline}

In Section \ref{sec:mean-min-baseline}, we argue that the unknown trajectory constant $|C|$ can increase
the variance of the policy gradient estimator. Here, we provide the formal derivation supporting this claim and reference Figure \ref{fig:ablation_value_init} for empirical validation. First, recall from Lemma \ref{lem:gae-relation} that the relative advantage is given by $\tilde{A}_t = A_t + B_t$, where $B_t$ is a trajectory constant determined by the initialization offset $C$. While Corollary \ref{cor:unbiased-relative} proves that the estimator remains unbiased (i.e., the first moment is unchanged), the second moment differs.

\begin{lemma}[Variance Inflation]
	Let $g_{std} = \nabla_\phi \log \pi_\phi(a_t|s_t) A_t$ be the standard gradient estimator and $g_{rel} = \nabla_\phi \log \pi_\phi(a_t|s_t) \tilde{A}_t$ be the relative gradient estimator. The variance of the relative estimator is given by:
	\begin{equation}
		\text{Var}(g_{rel}) = \text{Var}(g_{std}) + \mathbb{E}\left[ \|\nabla_\phi \log \pi_\phi(a_t|s_t)\|^2 B_t^2 \right] + 2\mathbb{E}\left[ A_t B_t \|\nabla_\phi \log \pi_\phi(a_t|s_t)\|^2 \right]
	\end{equation}
	Crucially, the strictly positive term $\mathbb{E}[ \|\nabla_\phi \log \pi_\phi\|^2 B_t^2 ]$ scales quadratically with the trajectory offset $C^2$.
\end{lemma}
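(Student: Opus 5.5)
We read $\mathrm{Var}(\cdot)$ for the vector-valued estimators as the total (trace) variance, $\mathrm{Var}(g)=\mathbb{E}\|g\|^2-\|\mathbb{E} g\|^2$. The proof is then a direct second-moment expansion that uses Lemma~\ref{lem:gae-relation} and Corollary~\ref{cor:unbiased-relative}. Write $\psi_t:=\nabla_\phi\log\pi_\phi(a_t\mid s_t)$. Under the hypothesis $\Delta_\theta=\Delta^\pi$, Lemma~\ref{lem:gae-relation} gives $\tilde A_t=A_t+B_t$, and therefore
\[
g_{rel}=\psi_t\tilde A_t=g_{std}+\psi_t B_t .
\]

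\emph{Step 1 (first moments).} By Corollary~\ref{cor:unbiased-relative}, $\mathbb{E}[g_{rel}]=\mathbb{E}[g_{std}]$. The argument is the one given there: condition on the trajectory prefix up to $s_t$, so that $B_t$ does not depend on $a_t$, and use $\mathbb{E}_{a_t}[\psi_t]=0$. Hence the subtracted terms $\|\mathbb{E} g_{rel}\|^2$ and $\|\mathbb{E} g_{std}\|^2$ coincide.

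\emph{Step 2 (second moments).} Since $A_t$ and $B_t$ are scalars, expanding the squared norm gives
\[
\|g_{rel}\|^2=\|\psi_t\|^2A_t^2+2A_tB_t\|\psi_t\|^2+\|\psi_t\|^2B_t^2 .
\]
Take expectations and subtract the common squared mean from Step~1. This yields exactly
\[
\mathrm{Var}(g_{rel})=\mathrm{Var}(g_{std})+\mathbb{E}\bigl[\|\psi_t\|^2B_t^2\bigr]+2\mathbb{E}\bigl[A_tB_t\|\psi_t\|^2\bigr].
\]
The middle term is nonnegative. It is strictly positive unless $\psi_tB_t=0$ almost surely, which is the non-degenerate case intended by the statement.

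\emph{Step 3 (quadratic scaling).} Lemma~\ref{lem:gae-relation} gives $B_t=C\,\kappa_t$ with the deterministic factor $\kappa_t:=(1-\gamma)\sum_{l=0}^{T-t}(\gamma\lambda)^l$. Therefore
\[
\mathbb{E}\bigl[\|\psi_t\|^2B_t^2\bigr]=\mathbb{E}\bigl[\kappa_t^2\,C^2\,\|\psi_t\|^2\bigr],
\]
which is quadratic in the offset $C$. For example, if $C$ is held fixed, or if we condition on $s_0$, it equals $C^2\,\mathbb{E}[\kappa_t^2\|\psi_t\|^2]$. By contrast, the cross term is only linear in $C$ and has indefinite sign. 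So for large $|C|$ the quadratic term dominates, and the variance is inflated.

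The main obstacle is interpretive rather than computational. The statement has to be read for vector-valued gradients, which is why we use the trace variance. We also need to be careful with normalization: the expression for $B_t$ in the proof of Corollary~\ref{cor:unbiased-relative} omits the factor $(1-\gamma)$, so we will work with the form from Lemma~\ref{lem:gae-relation}. Finally, $C=V^\pi(s_0)$ is itself random across rollouts, so "scales with $C^2$" must be stated either conditionally on $s_0$ or inside the expectation, as done in Step~3.
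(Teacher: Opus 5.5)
Your proposal is correct and follows the paper's proof essentially step for step: you use the trace variance $\mathrm{Var}(g)=\mathbb{E}\|g\|^2-\|\mathbb{E}g\|^2$, equal means via Corollary~\ref{cor:unbiased-relative}, the expansion of $\|\psi_t(A_t+B_t)\|^2$ with the scalars factored out, and $B_t\propto C$, which gives a $C^2$ noise term against a cross term linear in $C$. Your side remarks are accurate refinements that the paper glosses over: strict positivity needs $\psi_tB_t\neq 0$ with positive probability, the $C^2$ scaling is best read conditionally on $s_0$ because $C=V^\pi(s_0)$ is random, and the Corollary's formula for $B_t$ is missing a factor $(1-\gamma)$.
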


\begin{proof}
	The variance of any estimator $g$ is defined as $\text{Var}(g) = \mathbb{E}[\|g\|^2] - \|\mathbb{E}[g]\|^2$.
	From Corollary \ref{cor:unbiased-relative}, we know that $\mathbb{E}[g_{rel}] = \mathbb{E}[g_{std}] = \nabla_\phi J(\phi)$. Since the expected values are identical, the difference in variance is determined entirely by the second moment $\mathbb{E}[\|g\|^2]$.
	
	Let $u_t = \nabla_\phi \log \pi_\phi(a_t|s_t)$ denote the score function. Expanding it for the second moment gives:
	\begin{align}
		\|g_{rel}\|^2 &= \| u_t \tilde{A}_t \|^2 \\
		&= \| u_t (A_t + B_t) \|^2 \\
		%&= \| u_t A_t + u_t B_t \|^2 \\
		%&= (u_t A_t + u_t B_t)^\top (u_t A_t + u_t B_t) \\
		&= \|u_t A_t\|^2 + \|u_t B_t\|^2 + 2 (u_t A_t)^\top (u_t B_t)
	\end{align}
	Note that $A_t$ and $B_t$ are scalars, so we can factor them out:
	\begin{equation}
		\|g_{rel}\|^2 = \underbrace{\|u_t\|^2 A_t^2}_{\|g_{std}\|^2} + \|u_t\|^2 B_t^2 + 2 \|u_t\|^2 A_t B_t
	\end{equation}
	Recall from Lemma \ref{lem:gae-relation} that $B_t = (1-\gamma)C \sum (\gamma\lambda)^l$. Thus, $B_t$ is directly proportional to the initialization offset $C$.
	\begin{enumerate}
		\item \textbf{Noise Term:} The term $\mathbb{E}[\|u_t\|^2 B_t^2]$ is strictly non-negative. Since $B_t \propto C$, this noise term scales with $C^2$.
		\item \textbf{Correlation Term:} The term $2\mathbb{E}[\|u_t\|^2 A_t B_t]$ represents the correlation between the true advantage and the offset. While this term can be negative, it scales linearly with $C$.
	\end{enumerate}
	Consequently, for a sufficiently large uncorrected offset $|C|$, the quadratic noise term ($C^2$) will dominate the linear correlation term, resulting in an increase in estimator variance. This necessitates the \textit{Trajectory Ranking} strategy (from Section \ref{sec:mean-min-baseline}) to minimize $|C|$ and drive $B_t \approx 0$. Empirically, our ablation in Figure \ref{fig:ablation_value_init} validates the importance of decreasing estimator variance. 
\end{proof}

In addition to validate $B_t$ empirically, Table \ref{table:appendix_gae_baseline} presents a small example for a seven step trajectory. Given are rewards and true values to compute the true GAE for comparison to relative GAE:
\begin{align*}
	[r_0,\dots,r_5] &:= [1,1,1,1,1,1] \\ 
	[V(s_0),\dots,V(s_6)] &:= [2,3,4,5,6,7]
\end{align*}

In this example $V(s_0)=2$, so one can compute all predicted differences $B_t$ using $C=2$. Over all time steps, the observed advantage differences match exactly the prediction $B_t$, confirming the algebraic derivation from Lemma \ref{lem:gae-relation}. In this example, we rounded to two decimal digits for clean visualization.

\begin{table}[ht]
	\centering
	\caption{\textbf{Empirical Evidence for the Trajectory-constant.} This tables shows a small environment trajectory and validates the trajectory-constant baseline empirically. To compute true and relative GAE, we set $\gamma=0.9, \lambda=0.8$ and use $C=2$. }
	\begin{tabular}{lccccccc}
		\toprule
		& $t=0$ & $t=1$ & $t=2$ & $t=3$ & $t=4$ & $t=5$ & $t=6$ \\
		\midrule
		\multicolumn{8}{l}{\textbf{Values}} \\
		\midrule
		$r_t$                       & 1      & 1      & 1      & 1      & 1      & 1      & --     \\
		$V(s_t)$                			& 2      & 3      & 4      & 5      & 6      & 7      & 8      \\
		\midrule
		$\Delta(s_t,s_0)$      & --     & 1      & 2      & 3      & 4      & 5      & 6      \\
		$\tilde V(s_t)$ & 0      & 1      & 2      & 3      & 4      & 5      & 6      \\
		\midrule
		\multicolumn{8}{l}{\textbf{True \& Relative GAE} ($\gamma=0.9, \lambda=0.8$)}\\
		\midrule
		$\delta_t$              & 1.70   & 1.60   & 1.50   & 1.40   & 1.30   & 1.20   & --     \\
		$\tilde\delta_t$             & 1.90   & 1.80   & 1.70   & 1.60   & 1.50   & 1.40   & --     \\
		\midrule
		$A_t$                   & 4.73 & 4.21 & 3.63 & 2.96 & 2.16 & 1.20 & --     \\
		$\tilde A_t$                  & 5.35 & 4.79 & 4.15 & 3.41 & 2.51 & 1.40 & --     \\
		\midrule
		\multicolumn{8}{l}{\textbf{GAE Differences} ($C=2$)}\\
		\midrule
		$\tilde A_t - A_t$       & 0.61 & 0.58 & 0.52 & 0.45 & 0.34 & 0.20 & --     \\
		Predicted via $B_t$      & 0.61 & 0.58 & 0.52 & 0.45 & 0.34 & 0.20 & --  \\
		\bottomrule
	\end{tabular}
	\label{table:appendix_gae_baseline}
\end{table}

Figure \ref{fig:sample_B} generalizes the intuition of Table \ref{table:appendix_gae_baseline} by visualizing the prediction and decay of $B_t$ over a $T=128$ timestep rollout, which has the same length as in our experiments. In the figure, we also consider the same hyperparameter setting ($\gamma=0.99, \lambda=0.95$) which influences the magnitude of $B_t$. At the start $t=0$ of a rollout, one can see the absolute value of $B_0$ is large but almost constant, and inflates the GAE value. Then after some time, $B_t$ falls off and finally ($t\rightarrow$128) almost vanishes, so R-GAE $\approx$ GAE for the last few timesteps. This can be demonstrated for arbitrary $C$.
\begin{figure}
	\centering
	\includegraphics[width=0.7\textwidth]{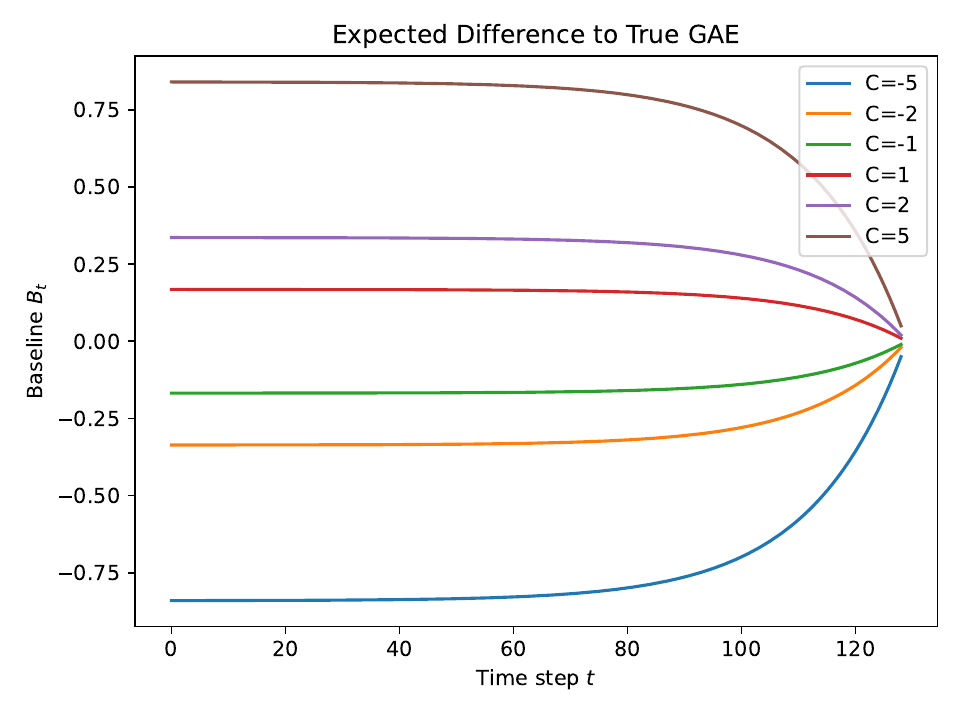}
	\caption{\textbf{Variance Visualization.} This figure shows the expected difference $B_t$ to the true GAE $A_t$ using different baseline values $C$. Note, that the difference is higher for larger $|C|$, because $\gamma$ applies a stronger absolute discount.}
	\label{fig:sample_B}
\end{figure}

\clearpage

\section{Hyperparameters}
\label{sec:hyperparameters}
We present the hyperparameter configuration for the Atari benchmark (ALE) which is used for all 49 games.

\begin{table}[h]
	\centering
	\caption{Hyperparameters for PPO+RV used in our experiments.}
	\label{tab:ppo_hparams}
	\begin{tabular}{l c}
		\toprule
		\textbf{Hyperparameter} & \textbf{Value} \\
		\midrule
		Discount factor $\gamma$            & $0.99$ \\
		GAE parameter $\lambda$             & $0.95$ \\
		N-step return target				& $5$ \\
		Clip parameter $\epsilon$           & $0.1$ \\
		Number of epochs per update         & $5$ \\
		Minibatch size                      & $128$ \\
		Number of parallel environments     & $8$ \\
		Rollout length $T$				    & $128$ \\
		Learning rate                       & $2.5 \times 10^{-4}$ \\
		Optimizer                           & Adam \\
		Adam epsilon                        & $1\times 10^{-5}$ \\
		Entropy coefficient $c_e$           & $0.01$ \\
		RV loss coefficient $c_v$           & $1.25$ \\
		RV value clipping        			& $0.15$ \\
		Gradient clipping (max-norm)        & $0.5$ \\
		\bottomrule
	\end{tabular}
\end{table}

\paragraph{Additional Hyperparameters Details.}
For training the relative critic we do not use purely random state pairing. With probability 33\%, the second state is chosen from the same episode as the reference state $s_i$, encouraging temporally coherent comparisons. Otherwise, the partner is sampled randomly in the batch. See Appendix \ref{sec:appendix_pair_ablation} for an ablation study.

\newpage
\section{Ablation on Pair Sampling}
\label{sec:appendix_pair_ablation}
In this section, we present Table \ref{tab:atari_pair_sampling_ablation} as an ablation study for different pair sampling strategies $\mu$ that can be applied for Equation \ref{eq:rv_critic_loss}. In our case, ``Biased'' means that with probability $p \in [0,1]$ the second state $s_j$ is randomly sampled from the same episode. In general, we can see that the sampling strategy has a rather small influence except when $p$ gets too large. Then there is a small decrease in performance. 

\begin{table}[H]%htbp
	\tiny
	\centering
	\caption{\textbf{Ablation of Pair Sampling.} Mean final scores (last 100 episodes) with standard deviation of our method PPO+RV with different pair sampling strategies after 40\,M game frames. The results average over five seeds.}
	\label{tab:atari_pair_sampling_ablation}
	\begin{tabular}{
			l
			S[table-format=7.1,
			table-space-text-post={\,$\pm$\,68362.1}]
			S[table-format=7.1,
			table-space-text-post={\,$\pm$\,683262.1}]
			S[table-format=7.1,
			table-space-text-post={\,$\pm$\,683262.1}]
			%S[table-format=7.1]
			%S[table-format=7.1]
		}
		\toprule
		Game               & {Random} & {Biased ($p=0.33$)} & {Biased ($p=0.66$)} \\
		\midrule
		Alien                  &  2123.4$\pm$346.0    & 1746.5$\pm$491.7 & 1802.0$\pm$600.5 \\
		Assault                &  4214.3$\pm$338.3   & 4018.8$\pm$174.1  & 3615.2$\pm$290.2\\
		Asterix                &  4303.2$\pm$1024.8   & 4495.2$\pm$1111.0 & 3821.2$\pm$461.1\\
		Atlantis               &  2683819.6$\pm$515071.2 & 2991307.2$\pm$542845.7	 & 3711740.4$\pm$431880.2\\
		BattleZone             &  22512.0$\pm$1038.6  & 22576.0$\pm$921.1 & 24616.0$\pm$1228.4\\
		BeamRider              &  2199.2$\pm$573.9    & 1966.5$\pm$130.1  & 1814.8$\pm$247.2\\
		Breakout               &  237.0$\pm$18.5     & 258.5$\pm$31.6 & 241.7$\pm$7.6\\
		DemonAttack            &  9364.3$\pm$1069.8  & 8669.9$\pm$734.8 & 7181.1$\pm$858.8\\
		Gopher                 &  3224.7$\pm$1097.3   &  3585.7$\pm$183.1 & 3626.2$\pm$338.1\\
		Gravitar               &  1414.2$\pm$759.3    &  1379.0$\pm$488.9 & 1232.0$\pm$406.9\\
		IceHockey              &  -3.7$\pm$0.6     &  -3.7$\pm$0.6	 & -3.9$\pm$0.4\\
		Krull                  &  9048.9$\pm$729.2    &  8527.1$\pm$393.7 & 8540.5$\pm$504.5\\
		MsPacman               &  1640.0$\pm$147.4   &  1783.4$\pm$125.8 & 1680.9$\pm$447.8\\
		NameThisGame           &  7000.3$\pm$1446.8  &  6880.2$\pm$937.0& 5890.7$\pm$296.8\\
		Qbert                  &  16630.4$\pm$938.0  &  15472.7$\pm$417.3 & 15631.9$\pm$789.1\\
		Riverraid              &  9941.6$\pm$1440.7   &  9271.0$\pm$1276.2 & 8717.8$\pm$478.7\\
		Robotank               &  22.5$\pm$1.0      &  21.3$\pm$2.0 & 19.7$\pm$1.5\\
		Seaquest               &  1592.6$\pm$307.4  &  1612.4$\pm$109.6 & 1236.1$\pm$321.7\\
		TimePilot              &  9701.6$\pm$435.3   & 9825.6$\pm$819.3 & 9692.8$\pm$584.0\\
		UpNDown                &  66393.6$\pm$27590.7  &  101955.1$\pm$25061.7 & 100891.3$\pm$33878.0\\
		VideoPinball           &  144597.1$\pm$89747.6  &  110187.1$\pm$57998.8 & 82002.4$\pm$16130.5\\
		WizardOfWor            &  5501.2$\pm$787.0   &  4890.0$\pm$494.9 & 5128.0$\pm$374.3 \\
		\bottomrule
	\end{tabular}
\end{table}

\newpage

\section{The Use of Large Language Models (LLMs)}
\label{sec:llms}
 Finally, we report on the use of LLMs in this section. The image in Figure \ref{fig:sample} is generated with Gemini 2.5 Flash Image (Nano Banana). Furthermore, we used large language models (LLMs) as writing assistants in the preparation of this manuscript. LLMs were employed to draft and refine text as well as to generate preliminary versions of some proof derivations. All mathematical results, derivations, and theoretical claims were independently verified and validated by the authors.

\end{document}